\title{Verification of Unknown Dynamical Systems via Autoencoder Latent Space}
\author{%
  Robert Reed\\
  Dept. of Aerospace Engineering\\
  University of Colorado Boulder\\
  Boulder, Colorado, USA\\
  \texttt{robert.reed-1@colorado.edu} \\
  \And
  Luca Laurenti\\
  Delft University of Technology\\
  Delft, Netherlands\\
  AI4I, Turin, Italy\\
  \texttt{L.Laurenti@tudelft.nl} \\
  \And
  Morteza Lahijanian\\
  Dept. of Aerospace Engineering\\
  University of Colorado Boulder\\
  Boulder, Colorado, USA\\
  \texttt{morteza.lahijanian@colorado.edu} \\
}
\DeclareMathAlphabet{\mathmybb}{U}{bbold}{m}{n}
\newcommand{\ind}{\mathmybb{1}}
\newtheorem{theorem}{Theorem}
\newtheorem{problem}{Problem}
\newtheorem{proposition}{Proposition}
\newtheorem{lemma}{Lemma}
\newtheorem{definition}{Definition}
\newtheorem{remark}{Remark}
\newcommand{\px}{\mathbf{x}}
\newcommand{\pv}{\mathbf{v}}
\newcommand{\py}{\mathbf{y}}
\newcommand{\prop}{\mathrm{p}}
\newcommand{\U}{\mathcal{U}}
\newcommand{\X}{\mathcal{X}}
\newcommand{\Z}{\mathcal{Z}}
\newcommand{\D}{\mathcal{D}}
\newcommand{\reals}{\mathbb{R}}
\newcommand{\nats}{\mathbb{N}}
\newcommand{\expect}{\mathbb{E}}
\newcommand{\enc}{h_{e}}
\newcommand{\dec}{h_{d}}
\newcommand{\post}{\mathrm{Post}}
\newcommand{\pre}{\mathrm{Pre}}
\newcommand{\conv}{\mathrm{Conv}}
\DeclareMathOperator*{\argmin}{\arg \min}
\begin{document}
\maketitle

\begin{abstract}
  Formal verification provides a powerful framework for proving that dynamical systems satisfy their specifications. However, these techniques face scalability challenges in high-dimensional settings, as they often rely on state-space discretization which grows exponentially with dimension. Learning-based approaches to dimensionality reduction, utilizing neural networks and autoencoders, have shown great potential to alleviate this problem. However, ensuring correctness of latent space verification results remains an open question. In this work, we provide a formal approach to reduce the dimensionality of systems via convex autoencoders and learn the dynamics in the latent space through a kernel-based method. We then construct a finite abstraction from the learned model in the latent space and guarantee that the abstraction contains the true behaviors of the original system. We show that the verification results in the latent space can be mapped back to the original system. Finally, we demonstrate the approach on multiple systems, including a 26D system controlled by a neural network, showing significant scalability improvements.
\end{abstract}

\section{Introduction}
    \label{sec:intro}
    
Formal verification of continuous-space dynamical systems provides a rigorous way to guarantee that a system satisfies a desired specification~\cite{baier2008principles, Tabuada2009, belta2017formal}. A standard approach is to express the specification in Linear Temporal Logic (LTL)~\cite{baier2008principles} and construct a finite-state abstraction that captures all possible behaviors of the system. Off-the-shelf model-checking tools can then be applied to this abstraction to determine whether the specification is satisfied.
This approach, however, faces significant challenges when applied to \emph{AI-enabled} systems, such as those controlled by neural networks that operate on \emph{high-dimensional} sensor inputs (e.g., camera images or LiDAR). First, the size of the abstraction grows \emph{exponentially with the dimension} of the state space. Second, for systems with \emph{complex} or partially \emph{unknown} dynamics, computing the transition relation between abstract states becomes highly nontrivial. Yet these challenges are increasingly critical, as AI components are rapidly becoming integral to modern autonomous systems.
In this work, we aim to address these challenges by developing a formal verification framework for systems with unknown dynamics that scales to high-dimensional settings.

Model order reduction has been widely used to simplify systems for classical control and verification~\cite{obinata2012model, baur2014model, antoulas2001survey}.
However, for systems with highly complex or unknown dynamics, such reduction techniques often fail to generalize. To address this limitation, modern approaches employ data-driven techniques, such as autoencoders~\cite{bengio2013representation}, to learn latent representations and latent dynamics for control, typically trading correctness guarantees for flexibility. For systems with unknown dynamics, some works construct finite abstractions with correctness guarantees~\cite{jackson2021formal, reed2023promises}, but these approaches suffer from scalability limitations. Applying such abstraction techniques directly in a learned latent space is challenging because existing frameworks assume explicit, well-defined system dynamics, whereas latent-space dynamics induced by learned representations may instead be governed by \emph{inclusion} dynamics, violating these assumptions.

To the best of our knowledge, no existing method enables formal verification in a latent space 
\emph{while also providing correctness guarantees} for both the mapping of specifications into the latent representation and the learned latent dynamics.
This work bridges this gap by combining dimensionality reduction with abstraction-based verification to provide formal guarantees for high-dimensional systems with unknown dynamics.
Our framework is built on two key observations: (i) the learned latent space must correctly represent the regions of interest in the state space, and (ii) the latent dynamics may naturally follow an inclusion model. To address these challenges, we employ convex autoencoders to map the high-dimensional space into a latent space with provable guarantees and develop a novel regression technique that enables sound reasoning about inclusion dynamics.

The contributions of this work are four-fold:
(i) a provably correct mapping of high-level specifications into a latent space, enabling efficient reasoning over the compressed representation,
(ii) a novel formulation of Gaussian Process (GP) regression for inclusion-valued (set-valued) dynamics, which we call \emph{Inclusion GPs}, enabling learning of latent dynamics governed by difference inclusions,
(iii) a framework for generating sound finite abstractions in the latent space, based on Inclusion GPs, along with procedures for 
mapping satisfaction guarantees back to the original high-dimensional system with proof of correctness,
(iv) demonstration of the efficacy of our approach on multiple systems, including a 26-dimensional system controlled by a neural network.

\paragraph{Related Work.}
%
Many works explore the use of a latent space for planning and control. A common focus is on enabling control for high-dimensional systems such as humanoid robotics or systems that derive control from visual observations. Common techniques for control in the latent space include model predictive control~\cite{watter2015embed, masti2021learning}, reinforcement learning~\cite{hafner2019learning}, and Hamilton-Jacobi reachability~\cite{nakamura2025generalizing}.
While these techniques often have great empirical success, they typically lack guarantees on the correctness of the approach which can result in false positives. 

Several recent works attempt to provide formal guarantees for latent-space control; however, they often rely on strong assumptions about the dynamics or demonstrate only empirical success. In particular, 
\cite{nakamura2025generalizing, vieira2024morals, lee2024mission} make use of autoencoders to define a latent space for control, but lack guarantees on their learned latent space dynamics. \cite{chen2016decomposed, awan2025reduced, schon2025formal} propose methods to soundly reason about dynamics in a low-dimensional space, but
have strong assumptions on the form of the dynamics. \cite{lutkus2025latent} develops a theory to correctly reason about dynamics in a latent space, but has no method to soundly define the latent domain, preventing the construction of a sound abstraction. We should also stress that various works make use of PCA to identify a low dimensional representation a system for control~\cite{matrone2010principal, dalibard2011linear}, but as noted in \cite{poupart2002value, bishop2006pattern} without significant structural constraints on the high dimensional representation (e.g., linear dynamics) the latent space identified by PCA is unlikely to preserve the dynamics exactly.

Unlike the aforementioned literature, this work provides soundness guarantees. Specifically, we construct an interpretable latent space for arbitrary continuous systems with correctness guarantees on the learned latent dynamics, hence eliminating false positives in the verification results.

\section{Problem Formulation}
    \label{sec:problem}
    Consider a discrete-time dynamical system: 
\begin{align}
    \label{true_dynamics} 
    \px(k + 1) &= f(\px(k)) 
\end{align}
where $\px(k) \in \reals^{n_x}$ is a fully-observable, possibly \emph{high-dimensional} state, and $f: \reals^{n_x} \rightarrow \reals^{n_x}$ is a 
possibly \emph{unknown} function. 
Intuitively, System~\eqref{true_dynamics} represents an autonomous system operating under a closed-loop controller that may be partially or entirely unknown due to AI (black-box) components,
e.g., System~\eqref{true_dynamics} can represent a system $s(k+1) = \mathtt{f}(s(k), \pi(y(k)))$ under observation-dependent policy/controller $\pi$ with input measurements $y(k) = \mathrm{o}(s(k))$, where $\mathrm{o}(\cdot)$ is some observation function of $s$, 
by defining $\px(k) = (s(k), y(k))$ as the stacked vector of $s$ and $y$. A grounded example is a robot navigating a room using a neural network controller $\pi$ that relies on LiDAR measurements $y$ and a desired goal orientation, where $\px$ is the stack vector of the robot's state $s$ and LiDAR readings $y$ necessary to reason about closed-loop system, making dimension $n_x$ potentially very large.

A \textit{trajectory} of System \eqref{true_dynamics} is written as $\omega_\px = \px(0)  \px(1) \ldots$ and the set of all trajectories is denoted as $\Omega_\px$.
Our aim is to verify the trajectories of System~\eqref{true_dynamics} against complex temporal requirements over regions in $\mathbb{R}^{n_x}$. These specifications, often expressed in temporal logic (e.g., LTL), reduce to \textit{reach-avoid} properties over an extended state space via a finite abstraction.  For simplicity of presentation, we focus on these properties without loss of generality. Given a compact domain $X \subset \mathbb{R}^{n_x}$, goal set $X_G \subset X$, and a set of unsafe regions $X_U = \{u_1, \ldots, u_l\}$, where $u_i \subset X$, we denote by $\varphi \equiv (X,X_G,X_U)$ a \textit{reach-avoid} specification, which requires reaching $X_G$ while staying within $X$ and avoiding $X_U$.
%
We say a trajectory $\omega_\px$ satisfies 
$\varphi$, denoted by $\omega_\px \models \varphi$,
if $\exists i \in \nats$ s.t. $\omega_\px(i) \in X_G$ and $\forall j < i$, 
$\omega_\px(j) \in (X\setminus X_U$).

In this work, we assume all sets $u_i \in X_U$, $X_G$, and $X$ are compact and convex.
Given specification 
$\varphi$,
our goal is to identify the set of initial states, from which trajectories of System~\eqref{true_dynamics} 
satisfy 
$\varphi$.
However, since System~\eqref{true_dynamics} is possibly high-dimensional and unknown, we assume the availability of an i.i.d. dataset $D = \{({x}_i, {x}_i^+)\}_{i=1}^{m}$, where $x_i^+ = f(x_i)$.
Our problem can then be stated as follows.

\begin{problem} [Verification] \label{prblm:1} 
    Consider System~\eqref{true_dynamics} with a domain $X$, unsafe regions $X_U$, and goal region $X_G$, where these sets are compact and convex.
    Given dataset $D = \{({x}_i, {x}_i^+)\}_{i=1}^{m}$ of i.i.d. samples of System~\eqref{true_dynamics} and reach-avoid task $\varphi = (X, X_G, X_U)$, find a 
    maximal initial set $X_0 \subseteq X$ such that, for every $x_0 \in X_0$, the system is guaranteed to satisfy $\varphi$, i.e.,  $\omega_\px \models \varphi$ for every $\omega_\px(0) \in X_0$.
\end{problem}

\begin{figure}[t]
    \centering
    \includegraphics[width=0.8\columnwidth]{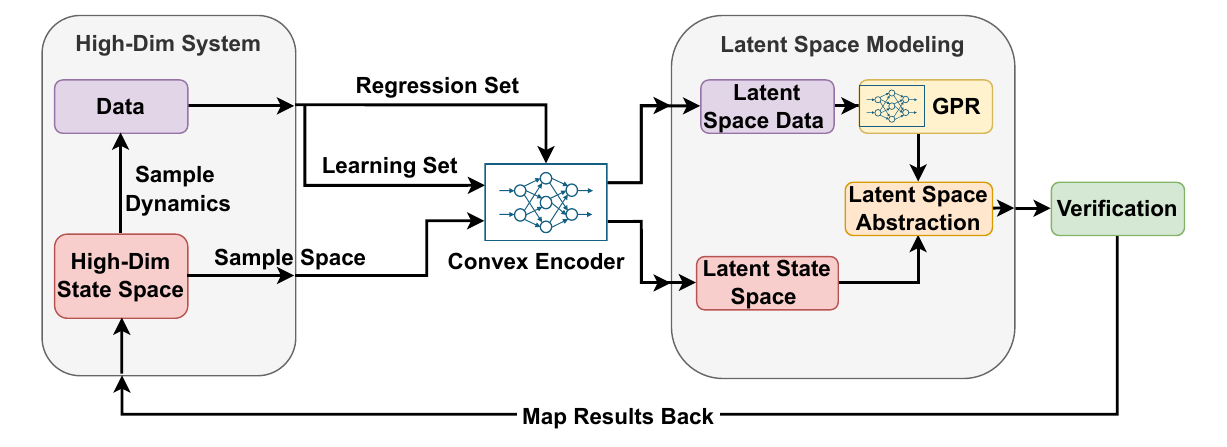}
    \caption{A flowchart of the method used in this paper. 
    }
    \label{fig:CALM_GPR}
\end{figure}

While Problem~\ref{prblm:1} may resemble a classical verification problem, the fact that $f$ is both high-dimensional and unknown makes verification highly challenging. To address this challenge, a key element of our approach is to map the high-dimensional system to a lower-dimensional representation using an autoencoder. This reduction, however, must be performed carefully to ensure that the latent (low-dimensional) representation remains a sound abstraction of the original system, thereby preserving the desired formal guarantees. We guarantee a sound abstraction by restricting the class of functions that the autoencoder can represent and reasoning with inclusion dynamics in the latent space, as the mapping is in general not one-to-one.


An overview of the approach is shown in Fig.~\ref{fig:CALM_GPR}.
Our method begins by learning an autoencoder that maps the system to a lower-dimensional latent space (Section~\ref{sec:autoencoder}). In this latent space, we construct a Nondeterministic Transition System (NTS) abstraction (Section~\ref{sec:NTS}) using (deep kernel) GP regression to accurately model the system dynamics (Section~\ref{sec:GP_modeling}). From the resulting NTS, existing verification techniques can be applied to determine which states satisfy $\varphi$.

\section{Background}
    \label{sec:background}

Here, we provide an overview of GP Regression (GPR) and the learning error bounds used in our method, and we introduce the key notation employed throughout our framework.

\vspace{-2mm}
\paragraph{Gaussian Process Regression}
Let $\D$ be a given dataset $\D = \{(x_i, y_i)\}_{i=1}^m$ obtained from sampling a function of the form $\py = \Psi(\px) + \pv$, with 
$y \in \reals$. GPR provides a method to predict the output of $\Psi$ at a new point $x^*$ using the dataset $\D$
and kernel $\kappa: \reals^{n_x} \times \reals^{n_x} \rightarrow \reals_{\geq 0}$
\cite{rasmussen:book:2006}.
Under the assumption that $\pv \sim \mathcal{N}(0, \sigma_n^2)$, a GP produces a posterior prediction of $\Psi$ conditioned on $\D$ with a mean $\mu_{\D}(x^*) = K_{x^*, \X}(K_{\X,\X} + \sigma_n^2 I)^{-1} Y$ and variance $\sigma^2_{\D}(x^*) = K_{x^*, x^*} - K_{x^*, \X}(K_{\X,\X} + \sigma_n^2 I)^{-1} K_{\X, x^*}$ 
where $\X = [x_1, \ldots, x_{m}]^T$ and $Y = [y_1, \ldots, y_m]^T$ are respectively the vectors of input and output data, $K_{\X,\X}$ is a matrix whose $i$-th row and $j$-th column is $\kappa(x_i, x_j)$, and $K_{x,\X} = K_{\X, x}^T$ are vectors whose $i$-th entry is $\kappa(x, x_i)$. 
Deep Kernel Learning (DKL)~\cite{Wilson2015} is an extension of GPR which incorporates a neural network $\psi$ into a base kernel as $\kappa(\psi(x), \psi(x'))$, enabling more accurate predictions~\cite{reed2023promises}

\textbf{GPR Error Bounds \ }
Under a standard smoothness assumption on $\Psi$, i.e., $\Psi$
lives in the Reproducing Kernel Hilbert Space (RKHS) of kernel $\kappa$, then there exists some constant $B > 0$ such that $\|\Psi(\cdot)\|_\kappa \leq B$ in the compact domain $X$. 
Under this assumption, 
\cite{reed2024error} shows that
GPR can be extended to the noise-free setting with regression error bounds according to the following proposition.

\begin{proposition}
    [\cite{reed2024error}, Theorem 2]
    Let $\Psi$ live in the RKHS of $\kappa$ with norm $B > 0$, and let $G = (K_{\X,\X} + \sigma_n^2I)^{-1}$, 
    $0 \leq d^* \leq \Psi(\X)^T G \Psi(\X)$. 
    In a noise-free setting (i.e., $\pv = 0$) then it holds that, for every $x \in X$ and with $\sigma_n >0$,
    $|\mu_{\D}(x) - \Psi(x)| \leq \sigma_{\D}(x)\sqrt{B^2 - d^*}.$
    \label{our_det_theorem}
\end{proposition}

\paragraph{Notation}
Given a high $n_x$-dim 
state $x$, an \emph{encoder} $\enc: \reals^{n_x} \rightarrow \reals^{n_p}$ maps $x$ to an $n_p$-dim 
state $z = \enc(x)$. 
We refer to $\reals^{n_p}$ as the \textit{latent space}.
A \emph{decoder} $\dec: \reals^{n_p} \rightarrow \reals^{n_x}$ then maps a $z \in \reals^{n_p}$ to a state $x$. 
Given a set of points $A \subset \reals^n$, $\conv(A)$ defines the convex hull of the set.  In a metric space $\reals^{n_x}$, $\mathcal{B}_{r}(x) = \{x' \in \reals^{n_x} \mid \|x-x'\| \leq r\}$ denotes a ball of radius $r \geq 0$ centered at $x \in \reals^{n_x}$.
For a function $h: \reals^n \rightarrow \reals^p$, 
$\post_{h}(A \subset \reals^n) := \{h(a) \mid a \in A\}$ denotes the \textit{posterior} of a $A$ under $h$, and 
$\pre_{h}(B \subset \reals^p) := \{b \in \reals^n \mid h(b) \in B\}$ is the \textit{pre-image} of a $B$ under $h$.

\section{Latent Space Abstraction
        }
    \label{sec:method}
    
In this section, we present our method of constructing a sound finite-abstraction in the latent space. It involves training an autoencoder coupled with DKL, i.e., GPR, in the latent space. Detailed proofs of all the lemmas and theorems are provided in Appendix~\ref{sec:app_proofs}.

We first observe that when the latent space has a lower dimension than the original space, i.e., $n_p < n_x$, the resulting latent dynamics may be neither injective nor deterministic. 
In particular, when the dimensionality of System~\eqref{true_dynamics} is reduced beyond its minimal realization, i.e., the smallest number of parameters required to exactly reconstruct the observations, the evolution in the latent space is governed by \emph{inclusion dynamics} as defined below.

\vspace{-1mm}
\begin{definition}[Latent Inclusion Dynamics]
    \label{def: latent inclusion dynamics}
    Consider System~\eqref{true_dynamics} and encoder $\enc: \reals^{n_x} \to \reals^{n_p}$. 
    For a latent point $z \in \reals^{n_p}$, let $\X_z = \pre_{\enc}(z)$ be its pre-image and $\X'_z = \post_f(\X_z)$ be the posterior of $\X_z$ under the dynamics of System~\eqref{true_dynamics}.  
    Latent space \emph{inclusion dynamics} of System~\eqref{true_dynamics} are defined
    as
    \begin{align}
        \label{eq: latent space dynamics}
        z(k+1) \in g(z(k)),
    \end{align}
    where $g: \reals^{n_p} \to 2^{\reals^{n_p}}$ is given by $g(z(k)) =  \post_{\enc}(\X'_{z(k)})$. 
\end{definition}
If the latent space is a correct minimal realization, then $g(z(k))$ is a singleton. This is illustrated 
by an example in Appendix~\ref{app:example_inclusion}.  
Our second observation is that, generating a sound abstraction requires preserving the sets $X, X_U, X_G$ in the latent space, which we refer to as \textit{interpretability}.
As both the state and sets are defined for the original system, reducing dimensionality in an uninformed manner can cause significant difficulty in verifying $\varphi = (X, X_G, X_U)$. 
These two observations highlight the need for an encoder $\enc$ that produces an interpretable latent space and supports accurate learning of inclusion dynamics $g$. To achieve this, we impose structural constraints on $\enc$ and inform its training with the knowledge that latent evolution follows inclusion dynamics. We describe the structural constraints below.

\subsection{Encoder Architecture}
\label{sec:encoder architecture}

Our framework is designed around generating an interpretable latent space where the inclusion dynamics behave well. 
To this end, we first restrict the encoder to be an \emph{Input Convex Neural Network} (ICNN)~\cite{amos2017input}, enabling 
efficient mapping and
provable reasoning over latent regions
by leveraging convexity.

To 
regulate the behavior of inclusion dynamics in the latent space, we impose additional restrictions on the ICNN. Specifically, we (i) restrict activation functions to be strictly monotonic to be homeomorphisms, e.g., SoftPlus (see Remark~\ref{rem:monotone} in Appendix~\ref{app:remarks}), 
(ii) require the first layer and all skip layers to be injective, and 
(iii) enforce full-rank weight matrices in every layer. We refer to this architecture as a \emph{Connected Input-Connected Output} (CICO) network.
These constraints guarantee that the set $g(z(k))$ in Eq.~\eqref{eq: latent space dynamics} is a connected set when $f$ is continuous, as formalized in Lemma~\ref{lemma: connected pre and post image}.

\begin{lemma}[Connected Inclusion Dynamics]
    \label{lemma: connected pre and post image}
    Consider System~\eqref{true_dynamics}, neural network encoder $\enc: \reals^{n_x} \to \reals^{n_p}$, and the corresponding latent inclusion dynamics $g$ in Eq.~\eqref{eq: latent space dynamics}. 
    If $\enc$ is CICO, then both $\pre_{\enc}(z)$ and $g(z)$ are connected sets.
\end{lemma}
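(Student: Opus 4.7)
The plan is to reduce the connectedness of $g(z)$ to that of $\X_z$, and then tackle $\X_z$ directly. Since $g(z) = \enc(f(\X_z))$ with both $\enc$ and $f$ continuous, and the continuous image of a connected set is connected, proving $\X_z$ connected will immediately yield $g(z)$ connected. The real task is therefore to show that the fiber $\enc^{-1}(z) = \X_z$ is connected for any $z$ in the range of the CICO encoder.

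For $\X_z$, I plan to proceed in two stages. First, a \emph{local} stage: show that the Jacobian $J_{\enc}(x) \in \reals^{n_p \times n_x}$ has full row rank $n_p$ at every $x$, so that $\enc$ is a smooth submersion. This should follow by chaining the CICO conditions layer-by-layer: the strictly positive derivatives of monotonic activations give invertible diagonal Jacobian factors, full-rank weight matrices preserve rank under linear composition, and injective skip connections ensure the dependence on $x$ is never lost as one descends through the layers. By the implicit function theorem, $\X_z$ is then a smooth embedded submanifold of dimension $n_x - n_p$, hence locally path-connected. Second, a \emph{global} stage: construct, inductively through the network layers, a continuous parameterization of $\X_z$ by a connected domain (e.g., an open convex subset of $\reals^{n_x - n_p}$). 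The inductive step uses strict monotonicity to invert activations pointwise and uses the injective skip connections to continuously select the $x$-coordinates consistent with the observed layer activations, producing a single continuous branch rather than a disjoint collection of branches.

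The main obstacle I anticipate is the global stage. The local submersion argument is a routine consequence of the implicit function theorem, but ruling out multiple disconnected components of $\X_z$ requires genuine use of the CICO architecture. A single scalar convex function like $\phi(x) = x^2$ already has disconnected level sets, so ICNN convexity alone is insufficient; without further constraints, a composition of convex activations could ``fold'' the input space and produce disconnected fibers. The CICO conditions, namely strict monotonicity, full-rank weights, and injective skip layers, are engineered to forbid such folding, and the proof must make this precise by tracing how each layer carries connected fibers of the partial network to connected fibers of the next. Once both stages are established, the connectedness of $g(z)$ follows immediately from the continuity of $f$ and $\enc$, as noted above.
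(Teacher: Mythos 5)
Your reduction of the connectedness of $g(z)$ to that of $\X_z$ via continuity of $f$ and $\enc$ is exactly the paper's final step, and your diagnosis of where the difficulty lies (convexity alone does not prevent folding; the strict monotonicity, full-rank, and injective-skip conditions must be used to rule out disconnected fibers) is correct. However, the proposal has a genuine gap: the ``global stage,'' which is the entire content of the lemma, is announced rather than proved. You state that one should ``construct, inductively through the network layers, a continuous parameterization of $\X_z$ by a connected domain'' and that ``the proof must make this precise by tracing how each layer carries connected fibers,'' but you give no mechanism for doing so. The local submersion/implicit-function-theorem argument you do carry out does not help here --- submersions over a connected base can have disconnected fibers (e.g., $(x,y)\mapsto \sin(x)e^{y}$ is a submersion of $\reals^2$ onto $\reals$ whose fiber over $1$ has infinitely many components) --- and the paper itself only mentions the Jacobian rank condition as an after-the-fact sanity check, not as the proof.

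The paper discharges the global step with a concrete geometric computation that your sketch is missing. Writing one skip block as $s(x)=A_2(\phi(A_1(x)))+A_3(x)$ with $A_1$ injective affine, $\phi$ coordinate-wise strictly monotone (hence a homeomorphism onto its image), $A_2$ full row rank, and $A_3$ injective affine, it first shows the fiber of $x\mapsto A_2(\phi(A_1(x)))$ over $y$ is (up to the homeomorphisms $A_1$ and $\phi$) the intersection $M\cap L_y$ of a convex set $M=\phi(A_1(\reals^n))$ with the affine subspace $L_y=A_2^{-1}(y)$, hence convex and connected; it then absorbs the additive skip term by the affine change of variables $u=A_3(x)$, turning the fiber condition into $\tilde f(u)+u=y$ on the affine image of $A_3$, which again has connected solution set; finally it handles the post-skip layers (full-rank affine maps and strictly monotone activations, each with convex pre-images) and closes the induction. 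To complete your proof you would need to supply an argument of this kind --- some explicit description of each layer's fibers as connected sets that is stable under composition --- rather than asserting that a single continuous branch exists.
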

%
The proof for the connectivity of $\pre_{\enc}(z)$ relies on the fact that full rank affine transformations and strictly monotonic activations each have connected pre-images. Then by continuity of both $f$ and $\enc$, set $g(z)$ must be connected. 
Based on Lemma~\ref{lemma: connected pre and post image}, the CICO property of $\enc$ ensures that latent inclusion dynamics $g$ 
always outputs a connected set, enabling us to learn $g$ in a 
non-conservative 
manner via GPR.

\subsection{Learning a Latent Space} \label{sec:autoencoder}
To learn the
latent space, we define three separate neural networks: (i) the encoder $\enc$, (ii) a latent space dynamics network $\hat{g}$
which models inclusion dynamics during training as a ball $\hat{g}(z) = \mathcal{B}_{\hat{g}_r(z)}(\hat{g}_c(z))$, and (iii) the decoder $\dec$. 
When training the networks, we propose the following loss function that consists of several objective functions to identify a latent space where an informative abstraction can be constructed, training all the networks simultaneously, i.e.,\\ 
$\enc, \hat{g}, \dec = \argmin_D\sum_{j=1}^4 \alpha_j L_j(\{x, x'\}) + \alpha_5 L_5(x, x'),$
where
\begin{align*}
    &L_1(x)= 
    \max\left\{0, \|\enc(f(x)) - \hat{g}_c(\enc(x))\| - \hat{g}_r(\enc(x))\right\}, \\
    &L_2(x)=  \hat{g}_r(\enc(x)), \\
    &L_3(x)= \|x - \dec(\enc(x))\|, \\
    &L_4(x)= \expect_{q(z\mid x)}[-\log p(x \mid z)] + \text{KL}(q(z \mid x) \| p(z)), \\
    &L_5(x,x')= \bigl|\|x - x'\|_{A_X} - \|\enc(x) - \enc(x')\|_{A_Z}\bigr|,
\end{align*}
and $\alpha_j > 0$ are weights for each objective.  
$L_1$ and $L_2$ encourage that the evolution of the dynamics when mapped to the latent space should be captured by a small ball predicted by $\hat{g}$.
$L_3$ is a standard loss that attempts to learn an architecture where the composition $\dec(\enc(\cdot))$ is Identity.
Loss functions $L_1$-$L_3$ encourage
a latent space where dynamics do not grow rapidly, which helps construct an abstraction with sparse transitions, and requires the latent space to remain informative enough to decode accurately. 

Objectives $L_4$ and $L_5$ are introduced to further encourage the expressivity of the latent space. 
$L_4$, a Variational Autoencoder loss term, acts as a regularizer that prevents the collapse of the latent space; post training the encoder is used as a deterministic function. 
Variation in training forces the encoder to distinguish between similar inputs and prevents other loss terms from being trivially satisfied by mapping all inputs to a point.
$L_5$ 
encourages the notion of distance to be similar between the original space and the latent space, with weighting matrices $A_X, A_Z$,
and encourages regions to be separated in the latent domain.

It should be noted, that while $\hat{g}$ and $\dec$ are required for the encoder $\enc$ to be well informed during training, they hold no guarantees on correctness and cannot be directly used to create the abstraction. We also note that $\dec$ can be defined as a set-valued neural network, to more accurately represent how points in the latent space are mapped to sets in the high-dimensional space. 

\subsection{Nondeterministic Transition System 
} \label{sec:NTS}
In the latent space, we construct an NTS abstraction. 
\begin{definition}
    [NTS] A Transition System (TS) is a tuple $\mathcal{N} = 
    (Q, T, Q_G, Q_U)$, 
    where 
    $Q$ is a finite set of states, 
    $T: Q \times Q \rightarrow \{0, 1\}$ is a transition function such that $T(q ,q') = 1$ if $q' \in Q$ is a successor of $q \in Q$,
    and
    $Q_G,Q_U \subseteq Q$ are the set of goal and unsafe states, respectively.  
    $\mathcal{N}$ is called \emph{non-deterministic} (NTS) if there exists a $q \in Q$ 
    such that $\sum_{q'\in Q} T(q,q') > 1$.
\end{definition}

A path on NTS $\mathcal{N}$ starting at $q \in Q$ is a sequence $\omega_{q} = \omega_{q}(0)\omega_{q}(1)\ldots$ s.t. $\omega_{q}(0) = q$ and for all $i \geq 0$, $T(\omega_{q}(i), \omega_{q}(i+1)) = 1$. 
Note that, from $q$, there may exist multiple paths due to non-determinism in $T$. The set of all paths from $q$ is denoted as $\Omega_q$.

The reach-avoid property over NTS $\mathcal{N}$ is denoted by $\varphi \equiv (Q_G,Q_U)$. We say path $\omega_q$ satisfies $\varphi$, i.e., $\omega_q \models \varphi$, if it reaches $Q_G$ while avoiding $Q_U$, i.e., $\exists i \in \nats$ s.t. $\omega_q(i) \in Q_G$ and $\forall j < i, \> \omega_q(j) \notin Q_U$.
Then we say initial state $q \models \varphi$ iff $\omega_q \models \varphi$ for all $\omega_q \in \Omega_q$.

Below, we describe a construction of an NTS abstraction of System~\eqref{true_dynamics} in the latent space such that if abstraction state $q \models \varphi$, then it is guaranteed that for every $x \in \pre_{\enc}(q)$, the original system trajectory $\omega_x \models \varphi$. This requires sound mapping of $X$, $X_U$, and $X_G$ to the latent space as well as careful construction of $T$ of the NTS.

\textbf{Mapping Regions of Interest. \ } \label{subsec:mapping}
To reason about $\varphi$ in the latent space,
it is necessary to map the sets $X, X_U, X_G$ from $\reals^{n_x}$ to $\reals^{n_p}$. 
With $\enc$ as a CICO network, we guarantee that a convex input set is always mapped to a convex output set. This notion allows us to use efficient sampling based methods to map a high-dimensional convex region down with a guaranteed under-approximation of the region in the latent space, while avoiding the computational complexity of standard neural network verification methods. 

Given convex set $r \subset \reals^{n_x}$, the encoding of this set 
$r_{Z} = \post_{\enc}(r)$
is also convex. Directly calculating $r_{Z}$ is generally infeasible; however, we can sample a set of $N$ points from $r$ and map each point through $\enc$ to define a set in $\reals^{n_p}$ where the convex hull is a subset of $r_{Z}$. 
We can also over-approximate $r_{Z}$ with high confidence through \textit{$\epsilon$-RandUp} \cite{lew2022simple}, shown in Proposition~\ref{prop:randup} in Appendix~\ref{app:propositions}, where the true posterior,  with high confidence, is contained in the convex hull of samples that is expanded according to the desired confidence and number of samples. 
Specifically,
Proposition~\ref{prop:randup} states that 
$\mathbb{P}\big( r_{Z} \subset (\conv(D_Z) \oplus \mathcal{B}_{\epsilon(N, \delta)}(0)) \big) \geq 1 - \delta$, where $D_Z = \{\enc(x_j) \mid x_j \sim r\}_{j=1}^N$ is a set of encoded points sampled from region $r$, $\epsilon(N, \delta)$ is an expansion term based on the number of samples $N$ and confidence $\delta$, and $\oplus$ is the Minkowski sum. The convexity of $\enc$ then enables us to identify both under- and over-approximations of $r_{Z}$ that are arbitrarily tight by increasing $N$ (\cite{lew2022simple}, Theorem 1). A visualization of the process is shown in Figure~\ref{fig:ICNN_embedding}. 

\begin{figure}[t]
    \centering
    \includegraphics[width=0.8\linewidth]{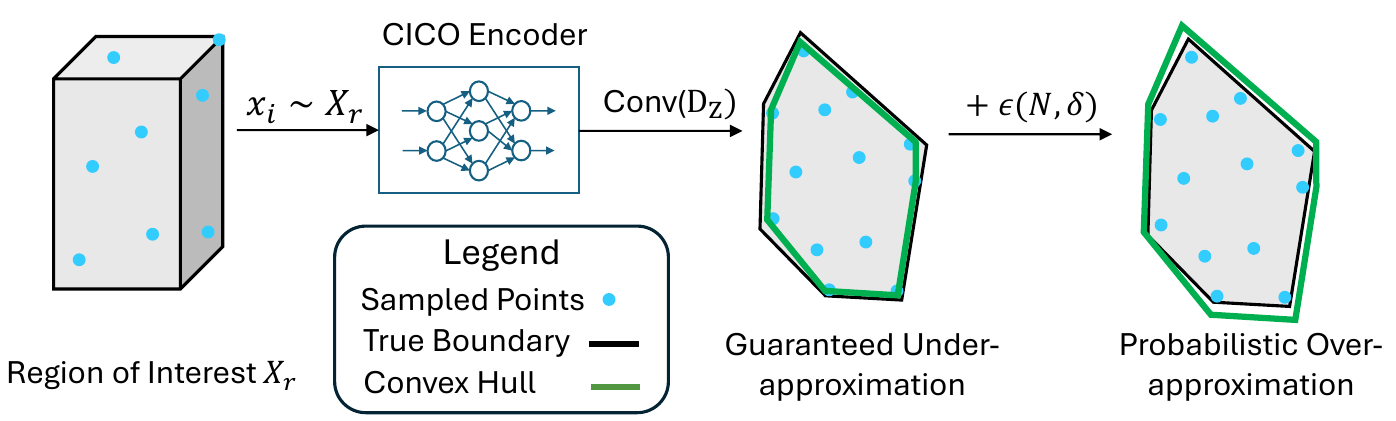}
    \caption{Visualization: mapping regions to the latent space.}
    \label{fig:ICNN_embedding}
\end{figure}

We highlight that, through this sampling-based method, the under-approximation of $r_{Z}$ is guaranteed deterministically, whereas the over-approximation holds with high confidence.  
Alternate methods, e.g., \cite{wang2021beta}, can be used for a deterministic over-approximation, though at the cost of reduced efficiency.

\textbf{NTS states $Q$, $Q_G$, and $Q_U$. \ } 
To define NTS states, we first map domain $X$ into the latent space, i.e., $\post_{\enc}(X)$, and compute its under-approximation. 
Define operator 
$$\mathfrak{R}(C, N, \epsilon) := \conv(\{\enc(x_j) \mid x_j \sim C\}_{j=1}^N) \oplus \mathcal{B}_{\epsilon}(0).$$
Note that, with $\epsilon = 0$ and $\epsilon > 0$, $\mathfrak{R}$ returns an under- and over-approximation of $\post_{\enc}(C)$, respectively.
Using the sampling-based method described above, we construct \textit{latent domain} $Z = \mathfrak{R}(X, N, 0) \subseteq \post_{\enc}(X)$. 

We then over-approximate the posterior of each unsafe region $u_i \in X_U$ under $\enc$ in the latent space using $\epsilon$-RandUP as $\hat{z}_{u,i} := \mathfrak{R}(u_i, N, \epsilon(N, \delta/l))$ for $i \in [0, l]$. This yields a sound (conservative) representation of the unsafe regions, with a joint confidence $1 - \delta$ of over-approximating $X_U$.

For a sound representation of the goal region $X_G$, we construct a latent region $z_G$ such that only points in $X_G$ map to it, 
i.e., if $z \in z_G$, then $\pre_{\enc}(z) \subseteq X_G$. 
We first compute an under-approximation of its posterior via sampling, i.e., $\check{z}_G = \mathfrak{R}(X_G, N, 0)$. We then define $X_{\lnot G} = X \setminus X_G$ and partition $X_{\lnot G}$ into a set of convex regions $\{X_{\lnot G}^k\}_{k=1}^K$. For each partition, we compute an over-approximation in the latent space, i.e., $\hat{z}_{\lnot G,k} := \mathfrak{R}(X_{\lnot G}^k, N, \epsilon(N, \delta/k))$. We then define the goal region as $z_G = \check{z}_G \setminus \cup_k \hat{z}_{\lnot G,k}$. Intuitively, $z_G$ is the set of latent states, to which, with confidence $1 - \delta$, only $X_G$ maps. 

Based on these regions, we define abstraction states.
With an abuse of notation, let $q$ denote both the discrete state $q \in Q$ of the NTS and the region $q \subseteq \reals^{n_p}$ that it occupies. 
We partition $Z$ into a set of regions $\overline{Q}=\{q_0, \ldots, q_{|\overline{Q}|}\}$ that respect $R = \{\hat{z}_{u,i}\}_{i=1}^l \cup \{z_G\}$, 
i.e., for every $r \in R$ and $q \in Q$ either $q \subseteq r$ or $q \cap r = \emptyset$. 
Denote by $q_u = \reals^p \setminus Z$ the set of points outside latent domain $Z$. Then, we define NTS abstraction states as $Q = \overline{Q} \cup \{q_u\}$ with $Q_G = \{q \in Q \mid q \subseteq z_G\}$ and $Q_U = \{q \in Q \mid q \subseteq \hat{z}_{u,i} \text{ for some } i\} \cup \{q_u\}$. 

The following lemma guarantees that the proposed construction is correct.

\begin{lemma}
    [Sound Reach-Avoid Abstraction States] 
    \label{lemma: correct state id}
    Consider latent (domain) region $\bar{Z} = \cup_{(Q \setminus \{q_u\})} q$, latent goal region $Z_G = \cup_{Q_G} q$, and latent unsafe set $Z_U = \cup_{(Q_U \setminus \{q_u\})}q$ constructed from $X$, $X_G$, and $X_U$  per procedure described in Section~\ref{subsec:mapping}.
    Then, it holds that $\pre_{\enc}(\bar{Z}) \subseteq X$.  
    Further, with confidence at least $1-\delta$, it holds that, for every $u \in X_U$,  $u \subseteq \pre_{\enc}(Z_U)$ and $\pre_{\enc}(Z_G) \subseteq X_G$. 
\end{lemma}

\subsection{NTS Transitions: Modeling the Latent Inclusion Dynamics} \label{sec:GP_modeling}

To generate NTS transitions $T$, we 
propose a novel approach to model inclusion dynamics accurately based on GPR. 

\textbf{Inclusion GP. \ } 
Note that the latent space inclusion dynamics $g$ has at most an intrinsic dimension $n_p$, i.e. for a given input $z$ the output set fills at most an $n_p$ dimensional volume. Hence, the output can be sufficiently parameterized by an additional $\leq n_p$ dimensional variable. Effectively, in the latent space we can lift $g$ to a higher dimension where dynamics follow difference equations as $\tilde{g}: \reals^{n_p} \times C \rightarrow \reals^{n_p}$ where $z^+ = \tilde{g}(z, c)$ for some $c \in C \subset \reals^{n_p}$, where $C$ is a compact set (e.g. a hypercube). With $\tilde{g}(z, c)$ a continuous function, GPR can be used to learn the $j$-th dimension of $\tilde{g}$. On the bounded domain $Z \times C \subset \reals^{n_p}$, the Sobolev space $H^s(Z, C)$ is an RKHS with $s > {n_p}/2$ that contains 
continuous functions; $\tilde{g}$ is continuous and therefore lives in this RKHS\footnote{A common choice for continuous dynamical systems is the squared exponential kernel; however, selecting an appropriate kernel for $\tilde{g}$ requires further investigation. See Appendix \ref{app:kernel} for detailed discussion.}.

A key challenge is that the parameter $c$ is unobserved, i.e., the data are $(z_i, z_i^+)$ rather than $((z_i, c_i), z_i^+)$. 
Thus, to apply GPR, we must infer latent variables $c_i$ consistent with the observed pairs and then can represent inclusion dynamics by predicting over the entire set $C$. 
We first observe that the latent space dynamics can be described according to the original input space as $z^+ = \mathcal{T}(x) = \enc(f(x))$, and assert that there exists an equivalent mapping to $\mathcal{T}$ as $z^+ = \tilde{g}(\enc(x), \psi_c(x))$ where $\psi_c: \reals^{n_x} \rightarrow C$.
We then learn $c_i$ through a unique architecture for a deep kernel to model the dynamics $\mathcal{T}(x)$ with a kernel architecture $\kappa_X(x, x') = \kappa(\psi_{l}(\enc(x), \psi_c(x)), \psi_{l}(\enc(x'), \psi_c(x')))$ where $\psi_c, \psi_l$ are learned neural networks. This kernel has an equivalent representation in the latent space with deep kernel $\kappa_Z((z, c), (z', c')) = \kappa(\psi_{l}(z, c), \psi_{l}(z', c'))$ using dataset $\{(z_i, c_i, z_i^+)\}_{i=1}^m = \{\enc(x_i), \psi_c(x_i), \enc(f(x_i))\}_{i=1}^m$. Then, following the approach in~\cite{reed2023promises}, the evolution of the $j$-th output dimension of $\tilde{g}$ is modeled with $\mu^{(j)}_D(z, c)$ on $\kappa_Z$. Under the assumption that $\mathcal{T}$ is in the RKHS of $\kappa_X$, then $\tilde{g}$ is in the RKHS of $\kappa_Z$. Since there are infinitely many equivalent representations of the latent space dynamics under the parameter $c$, any accurately learned representation through $\kappa_X$ sufficiently represents the latent space dynamics.

We refer to this GP model as an \textit{Inclusion GP}. A visualization of the Inclusion GP prediction is shown in Figure \ref{fig:Inclusion_GP} in Appendix~\ref{app:IGPR}.
Intuitively, an Inclusion GP captures the evolution of a non-deterministic system by predicting in a lifted domain with an unobserved coordinate. We then bound the error of the set prediction using RKHS analysis as follows.

\begin{theorem}[Inclusion GP Error Bounds]
    \label{thm: transition function construction}
    Let $g, \tilde{g}$ be the latent dynamics, 
    constants $B_j > 0$ and $d_j^* \geq 0$ hold for each $\tilde{g}^{(j)}$ according to Proposition \ref{our_det_theorem},
    and $\epsilon^{(j)}(z, c) = \sigma_\D^{(j)}(z,c)\sqrt{B_j^2 - d_j^*}$. Then,  
    \begin{align}
        g^{(j)}(z) \subseteq [\check{\mu}^{(j)}_{D}(z) - \overline{\epsilon}^{(j)}(z),  \; \hat{\mu}^{(j)}_{D}(z) + \overline{\epsilon}^{(j)}(z)] ,
    \end{align}
    where 
    $\check{\mu}^{(j)}_{D}(z)$ and $\hat{\mu}^{(j)}_{D}(z)$ are the min and max of $\mu^{(j)}_{D}(z, c)$ over $c$, respectively, 
    and $\overline{\epsilon}^{(j)}(z) = \max_c \epsilon^{(j)}(z, c)$.
\end{theorem}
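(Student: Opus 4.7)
The plan is to lift the deterministic per-function GP error bound from Proposition \ref{our_det_theorem} to hold uniformly over the auxiliary parameter $c$, and then enclose the union of the resulting pointwise intervals inside a single interval built from the min/max of the posterior mean in $c$ and the max of the error width in $c$. The argument is essentially a min/max manipulation on top of the standard RKHS bound.

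First, I would invoke Lemma \ref{lem:nondet_to_cont} to write $g(z) = \{g_c(z) : c \in [a,b]\}$ as a family of continuous single-valued selectors. The Inclusion GP augments the input with $c$, so for each fixed $c$ the $j$-th component $g_c^{(j)} : \reals^{n_p} \to \reals$ is a genuine (single-valued) function modeled by the GP posterior mean $\mu_D^{(j)}(z,c)$ with variance $\sigma_D^2(z,c)$. Under the hypothesis of the theorem (the same RKHS norm bound $B_j$ and $d_j^*$ are valid for every selector $g_c^{(j)}$), the noise-free specialization of Proposition \ref{our_det_theorem} applied to the augmented input $(z,c)$ gives, for every $c \in [a,b]$,
\begin{equation*}
|\mu_D^{(j)}(z,c) - g_c^{(j)}(z)| \leq \sigma_D(z,c)\sqrt{B_j^2 - d_j^*} = \epsilon^{(j)}(z,c).
\end{equation*}
Rearranged, this says $g_c^{(j)}(z) \in [\mu_D^{(j)}(z,c) - \epsilon^{(j)}(z,c),\; \mu_D^{(j)}(z,c) + \epsilon^{(j)}(z,c)]$ for each $c$.

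Second, since $g^{(j)}(z)$ is exactly the image of this family over $c$, I would take the union of the pointwise intervals:
\begin{equation*}
g^{(j)}(z) \subseteq \bigcup_{c \in [a,b]} \bigl[\mu_D^{(j)}(z,c) - \epsilon^{(j)}(z,c),\; \mu_D^{(j)}(z,c) + \epsilon^{(j)}(z,c)\bigr].
\end{equation*}
It then remains to bound this union by a single interval. For any $c$,
\begin{equation*}
\mu_D^{(j)}(z,c) - \epsilon^{(j)}(z,c) \;\geq\; \min_{c}\mu_D^{(j)}(z,c) - \max_{c}\epsilon^{(j)}(z,c) \;=\; \check{\mu}_D^{(j)}(z) - \overline{\epsilon}^{(j)}(z),
\end{equation*}
and symmetrically the upper endpoint is at most $\hat{\mu}_D^{(j)}(z) + \overline{\epsilon}^{(j)}(z)$. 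Substituting these uniform lower and upper bounds into the union completes the inclusion. Compactness of $[a,b]$ together with continuity of $\mu_D^{(j)}$ and $\sigma_D$ in $c$ ensures these extrema are attained.

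The main obstacle is not the min/max bookkeeping, which is routine, but the uniform RKHS assumption: the bound $B_j$ and the self-energy $d_j^*$ must be valid simultaneously for every selector $g_c^{(j)}$. In practice this is exactly the assumption baked into using a single GP over the augmented input $(z,c)$ with one kernel $\kappa$ — the combined map $(z,c) \mapsto g_c^{(j)}(z)$ is taken to live in the RKHS of $\kappa$ with norm $\leq B_j$ — so I would state this as the operative hypothesis inherited from Proposition \ref{our_det_theorem} and from the construction of the Inclusion GP in Section \ref{sec:GP_modeling}, rather than try to derive it from the CICO structure alone.
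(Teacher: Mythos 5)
Your proposal is correct and follows essentially the same route as the paper's proof: apply the per-$c$ bound from Proposition~\ref{our_det_theorem} to each selector $g_c^{(j)}$, then absorb the union of the resulting intervals into a single interval via $\check{\mu}^{(j)}_{D}$, $\hat{\mu}^{(j)}_{D}$, and $\overline{\epsilon}^{(j)}$. Your version is slightly more explicit about the union-of-intervals step, the attainment of the extrema by compactness, and the uniform RKHS hypothesis over $c$, all of which the paper leaves implicit.
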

The proof relies on the fact $g$ and $\tilde{g}$ are functionally equivalent representations and that $\overline{\epsilon}^{(j)}(z)$ is a conservative error bound for each $\tilde{g}^{(j)}(z, c)$ prediction, and a union of conservative error sets would then contain $g(z)$.

\begin{remark} \label{rem:IGPR}
We note that while our formulation defines $g(z)$ as a connected set, this is not a requirement to perform Inclusion GPR. If the output is not connected, the same approach can be used, but the GP prediction becomes a conservative connected set representation of the disjoint evolution. Our formulation is designed to have minimal conservatism in the prediction of the latent space dynamics.
\end{remark}

\textbf{Latent RKHS Constants. }
As stated in~\cite{reed2024error}, $d_j^*$ 
(the parameter used in the GPR error bound in Proposition~\ref{our_det_theorem} and Theorem~\ref{thm: transition function construction})
can be conservatively set to 0 to avoid difficulty in calculating a nontrivial value, as in our case  
$d_j^* \leq \tilde{g}^{(j)}(Z)^T(K_{(Z,c), (Z,c)} + \sigma_n^2 I)^{-1} \tilde{g}^{(j)}(Z)$ would require being able to bound the effect of $\tilde{g}(Z, c)$ on the quadratic form. 
$B$ can be formally bounded using approaches in~\cite{jackson2021formal}, and only depends on having a Lipschitz constant for the dynamics. The Lipschitz constant for $g$ can be easily bounded using the constants for $\enc$ and the dynamics in the original space, or bounded using sampling based approaches outlined in~\cite{wood1996estimation}.

\textbf{NTS Transition Function. }
Using an Inclusion GP, we can propagate region $q \subset \reals^{n_p}$ for a subset $v \subset [a, b]$ as
\begin{align}
    \mathrm{Im}(q, v) = \{w \mid w^{(j)} = \mu^{(j)}_D(z, c),\> c \in v,\> z \in q\}.
\end{align}
Let $\epsilon^{(j)}_{q,v} = \sup_{z \in q, c \in v}\epsilon^{(j)}(z, c),$
and the worst case 
regression error over a region $q$ be 
$e^{(j)}(q, v) = \sup_{z \in q, c \in v} |\mu_{D}^{(j)}(z, c) - \tilde{g}^{(j)}(z, c)|.$
Then, $e^{(j)}(q, v) \leq \epsilon^{(j)}_{q, v}.$ 
Using operator $\text{Im(q,v)}$ and error bound $\epsilon_{q, v}$, Proposition~\ref{prop:trans} shows how to over-approximate the transition from $z \in q$ to $q'$ under dynamics $g$, defining transition function $T$.

\begin{proposition}[\cite{jackson2021formal}, Theorem 1] 
    \label{prop:trans}
    Let $\bar{q}'(\epsilon)$ be a region that is obtained by expanding each dimension of $q'$ with the corresponding scalars in $\epsilon_{q, u}$. Also let $V$ be a partition of $C$ such that $\cup_{V} v = C$. Define the transition function of the NTS $\mathcal{N}$ abstraction as
    \begin{align}
        T(q, q') = \max_{v \in V} \big( \ind_{\bar{q}'(\epsilon)}(\mathrm{Im}(q, v))\big),
        \label{eq:upper-bound}
    \end{align}
    where $\ind_W(H)$ is the indicator function that returns $1$ if $W \cap H \neq \emptyset$ and $0$ otherwise. 
    Then, it holds that
    $\forall z \in q, \> g(z) \subseteq \{z' \in q' \mid q' \in Q,\; T(q,q')=1\}$.
\end{proposition}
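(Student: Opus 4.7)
The plan is a direct containment argument: pick an arbitrary successor of $z$ under the inclusion dynamics, locate the subinterval of $[a,b]$ that witnesses it, and show that the GP-based error bounds force the corresponding image set to intersect the inflated destination region, so that the indicator in~\eqref{eq:upper-bound} is triggered.

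First I would fix $z \in q$ and an arbitrary $w \in g(z)$. By Lemma~\ref{lem:nondet_to_cont}, there exists some $c \in [a,b]$ with $w = g_c(z)$, and by construction of $U$ as a partition of $[a,b]$ there is some $u \in U$ containing $c$. By definition of $\mathrm{Im}(q,u)$, the vector $v := (\mu_D^{(1)}(z,c), \ldots, \mu_D^{(n_p)}(z,c))$ belongs to $\mathrm{Im}(q,u)$. Applying Proposition~\ref{our_det_theorem} coordinatewise (as used in Theorem~\ref{thm: transition function construction}) gives
\begin{equation*}
|w^{(j)} - v^{(j)}| = |g_c^{(j)}(z) - \mu_D^{(j)}(z,c)| \;\leq\; \epsilon^{(j)}(z,c) \;\leq\; \epsilon^{(j)}_{q,u}
\end{equation*}
for every coordinate $j$, where the last inequality uses that $\epsilon^{(j)}_{q,u}$ is the supremum of $\epsilon^{(j)}(\cdot,\cdot)$ over $q\times u$. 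Hence $w$ lies in the axis-aligned box of half-widths $\epsilon_{q,u}$ around $v$.

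Next I would use that the partition $\overline{Q}$ together with $q_u$ covers the latent space, so $w$ belongs to some unique $q' \in Q$. Since $v$ is coordinatewise within $\epsilon^{(j)}_{q,u}$ of $w \in q'$, the point $v$ lies in $\overline{q'}(\epsilon)$, the dimension-wise $\epsilon_{q,u}$-expansion of $q'$. Thus $v \in \mathrm{Im}(q,u) \cap \overline{q'}(\epsilon)$, so $\mathbb{1}_{\overline{q'}(\epsilon)}(\mathrm{Im}(q,u)) = 1$, and taking the maximum over $U$ in~\eqref{eq:upper-bound} gives $T(q,q') = 1$. Since $w \in g(z)$ was arbitrary, every successor lies in some $q'$ with $T(q,q')=1$, which is the desired inclusion.

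The only delicate step is the coordinatewise use of the error bound: one has to be careful that the same pair $(z,c)$ produces the point $v \in \mathrm{Im}(q,u)$ in all $n_p$ coordinates simultaneously, rather than combining different coordinatewise witnesses, since otherwise the box argument around $v$ would not go through. Modulo this bookkeeping, the statement is essentially an adaptation of~\cite[Theorem~1]{jackson2021formal}: the only substantive change from the original is that the $\epsilon^{(j)}_{q,u}$ here are the noise-free Inclusion-GP bounds $\sup\sigma_D^{(j)}(\cdot,\cdot)\sqrt{B_j^2 - d_j^\ast}$ from Theorem~\ref{thm: transition function construction}, rather than bounds that include an additive $\Lambda_x$ noise term.
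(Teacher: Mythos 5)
Your proof is correct and follows exactly the containment argument that the paper itself defers to by citation (the paper gives no explicit proof of this proposition, describing it only as a modification of the cited theorem for the noise-free setting): fix $w \in g(z)$, obtain a witness $c$ via Lemma~\ref{lem:nondet_to_cont}, locate $u \ni c$, apply the coordinatewise error bound to place $v = \mu_D(z,c) \in \text{Im}(q,u)$ inside $\overline{q'}(\epsilon)$ for the cell $q'$ containing $w$, and conclude $T(q,q')=1$. Your flagged bookkeeping point — that the same pair $(z,c)$ must witness all $n_p$ coordinates of $v$ simultaneously — is the right subtlety to worry about and is resolved correctly by your construction.
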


We finalize the construction of $T$ by making $q_u$ an absorbing state, i.e., $ T(q_u, q') = 1$ if $q'=q_u$ and $0$ otherwise. $T$ then conservatively captures the behavior of the inclusion dynamics under discretization, i.e., for $z' \in q'$ and $z \in q$ if $z' \in g(z)$ then $T(q, q') = 1$.
With the $T$ defined, the NTS abstraction is completed. 
The following theorem shows our abstraction $\mathcal{N}$ is sound w.r.t. reach-avoid property $\varphi$.

\begin{theorem}[Correctness]
    \label{thm: soundness}
    Let NTS $\mathcal{N}$ be constructed with the states 
    defined as in Section~\ref{sec:NTS} and transition function defined in Proposition~\ref{prop:trans}. 
    If a state $q \in Q$ satisfies $\varphi$ on $\mathcal{N}$, then for every $x \in \pre_{\enc}(q)$, the trajectory $\omega_{x}$ satisfies $\varphi$
    with confidence at least $1-\delta$, i.e.,
    with confidence at least 
    $1-\delta$, 
    \quad $q \models \varphi \implies \omega_x \models \varphi \quad \forall x \in \pre_{\enc}(q).$

\end{theorem}

Proposition~\ref{prop:trans} establishes that 
for every trajectory $\omega_{x_0}$ s.t. $\enc(x_0) \in q_0$, there exists a trajectory $\omega_{q_0} \in \Omega_{q_0}$ such that $\enc(\omega_{x_0}(k)) \in \omega_{q_0}(k)$ for all $k \geq 0$,
and Lemma~\ref{lemma: correct state id} asserts $Q_U$, $Q_G$ conservatively represent $X_U$, $X_G$ with confidence $1 - \delta$.
Then it can be seen that if $\omega_{x_0} \not\models \varphi$ there must exist a path $\omega_{q_0}$ where $\enc(\omega_{x_0}(k)) \in \omega_{q_0}(k)$
for all $k \geq 0$ s.t. $\omega_{q_0} \not\models \varphi$, which establishes the contrapositive of  
$q \models \varphi \> \implies \> \omega_x \models \varphi$ for all $x \in \pre_{\enc}(q).$

\textbf{Verification and Decoding. }
Given that the NTS $\mathcal{N}$ is a sound abstraction of System~\eqref{true_dynamics} per Theorem~\ref{thm: soundness} in the latent space, we use existing methods to verify $\mathcal{N}$ 
and identify the set of states that are guaranteed to satisfy 
$\varphi$~\cite{baier2008principles}. 
Due to the uncertainty introduced by non-determinism and discretization in the latent space, we obtain three sets from verification: $Q_{yes}$ (set of states guaranteed to satisfy $\varphi$,
$Q_{no}$ 
(set of states that can never satisfy $\varphi$),
and $Q_? = Q \setminus (Q_{yes} \cup Q_{no})$. 
We can then make use of convex optimization to decode (as $\enc$ is convex) latent states to identify satisfying states in the original space (Appendix~\ref{app:inverse}).

\begin{remark}[Extension to LTL verification]
    The proposed latent-space NTS abstraction $\mathcal{N}$ can be straightforwardly extended to verify LTL properties of System~\eqref{true_dynamics} (Appendix~\ref{sec:app_LTL}).
\end{remark}

\section{Case Studies}
    \label{sec:examples}
    
In this section, we evaluate our approach through several case studies.
We first demonstrate the effectiveness of the encoder training. We then present three illustrative examples:
(i) dimensionality reduction beyond the minimal realization while preserving correct satisfaction reasoning,
(ii) correct reasoning with over-parameterized models, and
(iii) a high-dimensional system where traditional abstraction methods suffer from state explosion. 

Code is provided in the supplementary material, and
implementation details (dynamics, network architectures, hyperparameters, and training procedures) are provided in Appendix~\ref{app:experiment_params}. 
In all experiments, set approximations ensure abstraction confidence of at least $99.99\%$.

\textbf{Training loss. }
We first perform an ablation study on the loss terms used to train the encoder, which identifies the necessity of each term to learn an interpretable latent space suited for abstraction. In short, our results show that without $L_4$ and $L_5$ training is unstable and the latent space collapses and without $L_1$ and $L_2$, latent dynamics grow rapidly. Results and detailed discussions are provided in Appendix~\ref{app:ablation}.

Next, we show the entire framework on multiple case studies.  Figures~\ref{fig:3D_nonlinear}-\ref{fig:6D_nonlinear}
show verification results in the latent space, using the following legend: $Q_G$ (dark green), $Q_U$ (black), $Q_{\text{yes}}$ (light green), $Q_{?}$ (yellow), and $Q_{\text{no}}$ (red).
Computation times are reported in Table~\ref{table:ver_results}. Abstraction generation time is dominated by computing $\mathrm{Im}(q,v)$, and is hence strongly affected by iterative refinement and discretization of $C$.

\begin{figure}[t]
    \centering
    \begin{subfigure}[b]{0.25\linewidth}
        \includegraphics[width=\linewidth]{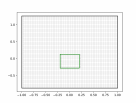}
        \caption{3D $\rightarrow$ 2D latent space}
    \end{subfigure}
    \begin{subfigure}[b]{0.25\linewidth}
        \includegraphics[width=\linewidth]{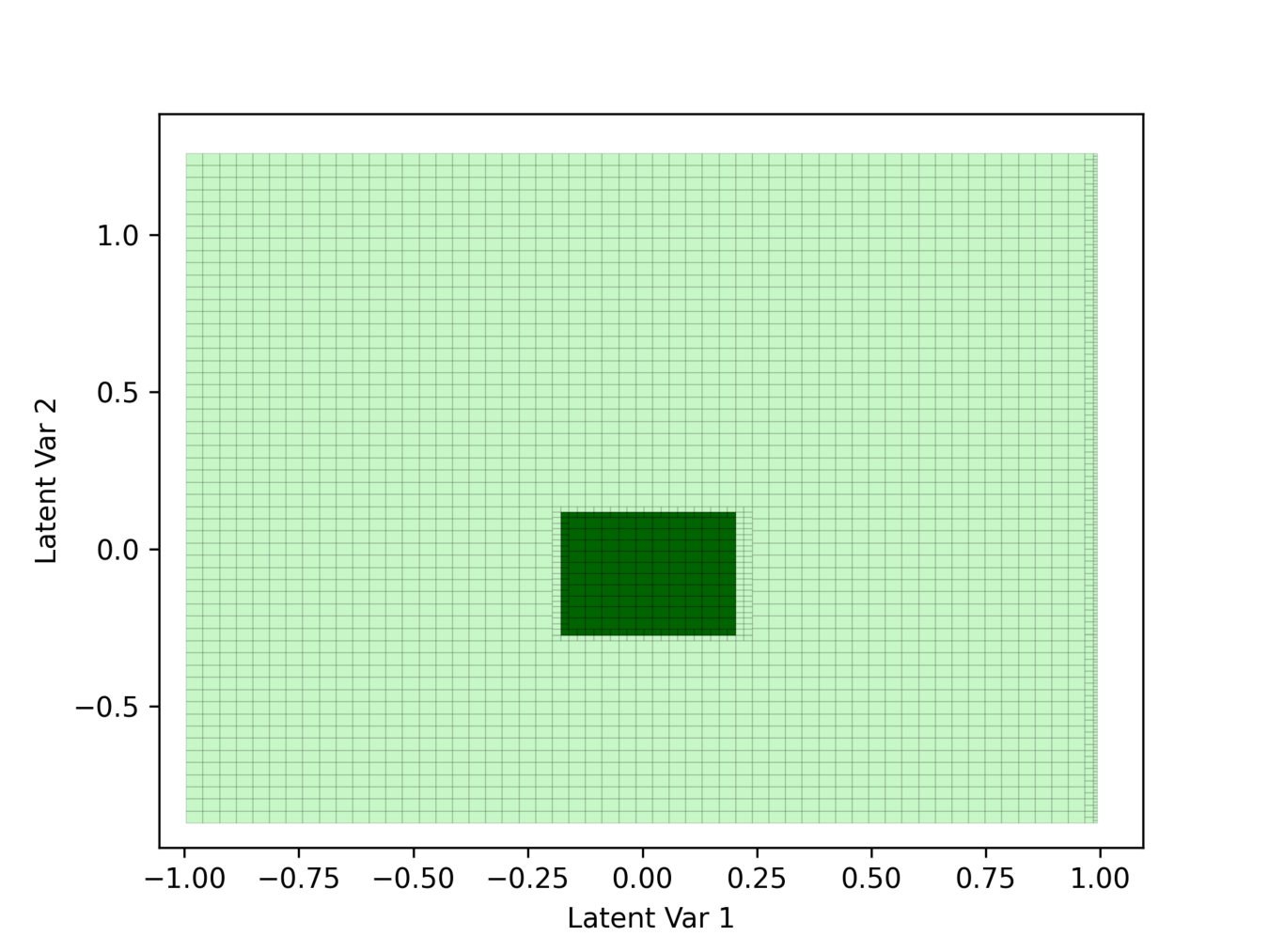}
        \caption{Verification Results}
    \end{subfigure}
    \caption{Results for ``\textit{3D-Nonlinear-to-2D}'' Case Study. (a) The 2D latent space with the goal shown outlined in green, the domain outlined in black. (b) Verification results.
    }
    \label{fig:3D_nonlinear}
\end{figure}

\begin{figure*}[t]
    \centering
    \begin{subfigure}[t]{0.28\linewidth}
        \centering
        \includegraphics[width=\linewidth]{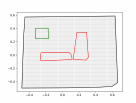}
        \caption{Initial Latent Discretization}
        \label{fig: 26D latent space}
    \end{subfigure}
    \begin{subfigure}[t]{0.28\linewidth}
        \centering
        \includegraphics[width=\linewidth]{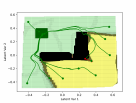}
        \caption{Latent Verification Results}
        \label{fig: 26D verification}
    \end{subfigure}
    ~
    \begin{subfigure}[t]{0.205\linewidth}
        \centering
        \includegraphics[width=\linewidth]{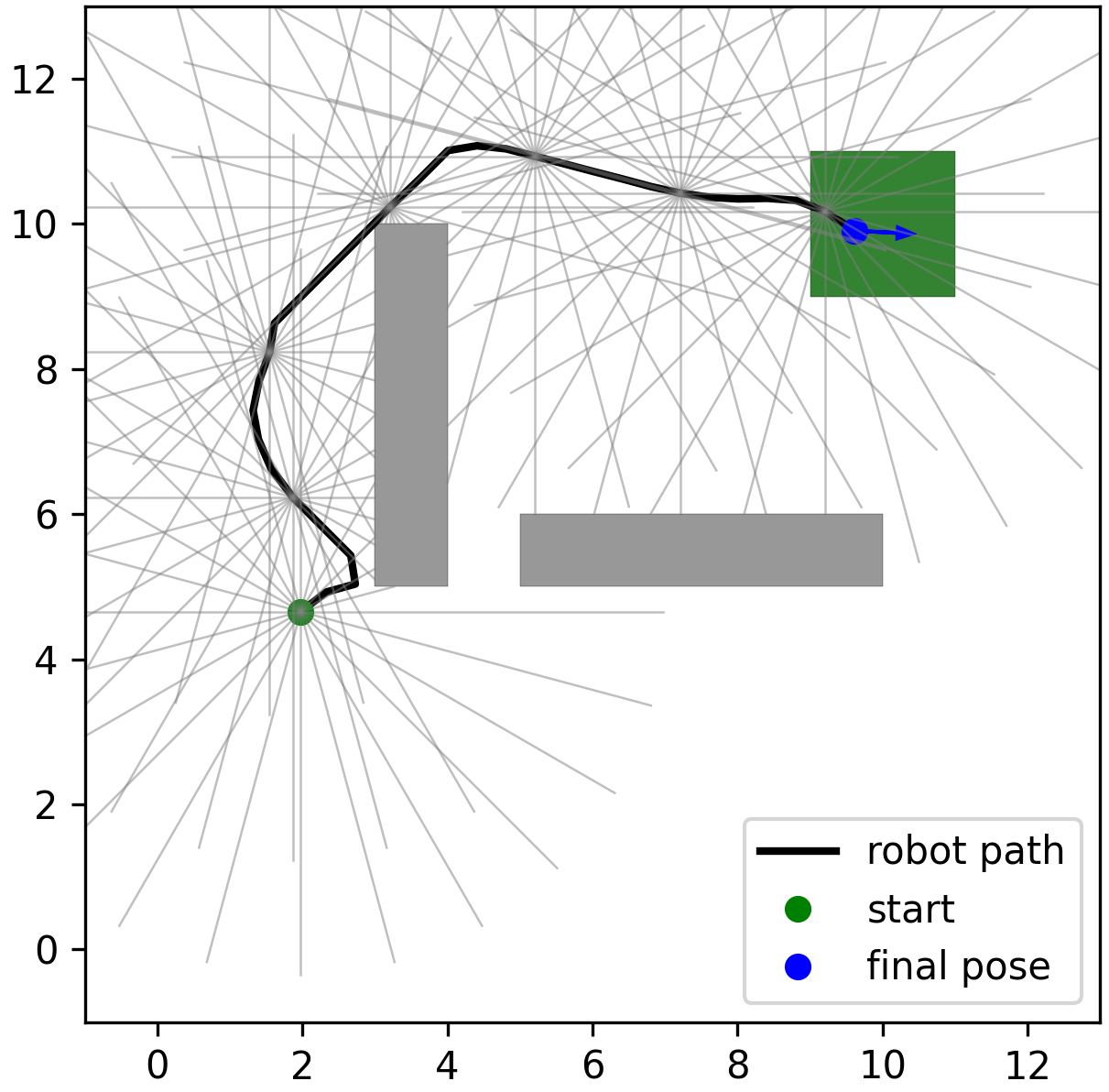}
        \caption{Safe System Traj.}
        \label{fig: 26D safe traj}
    \end{subfigure}
    \begin{subfigure}[t]{0.205\linewidth}
        \centering
        \includegraphics[width=\linewidth]{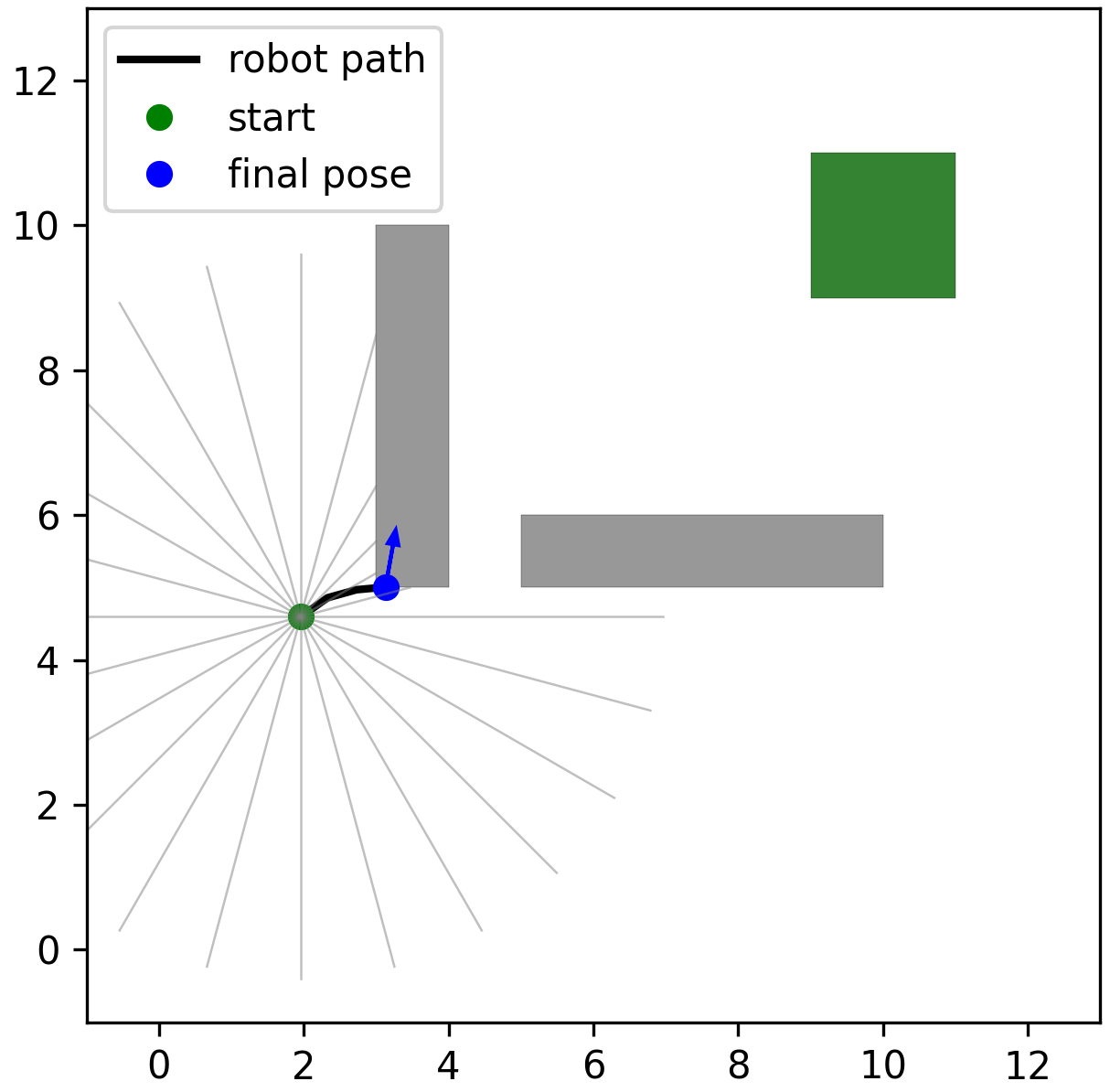}
        \caption{Unsafe System Traj.}
    \end{subfigure}
    \caption{
    Results for the ``\textit{26D-to-2D}'' Case Study. (a) The 2D latent space and regions. (b) Verification results with (encoded) sample trajectories from the original space that satisfy $\varphi$ 
    in green and a violating trajectory in red. (c)-(d) Safe and Unsafe trajectories that begin in the same region $q$ from (b) shown in the original space of the system (LiDAR shown every 5 steps).
    }
    \label{fig:lidar}
\end{figure*}

\begin{figure}[t]
    \centering
    \begin{subfigure}[t]{0.495\linewidth}
        \centering
        \includegraphics[width=\linewidth]{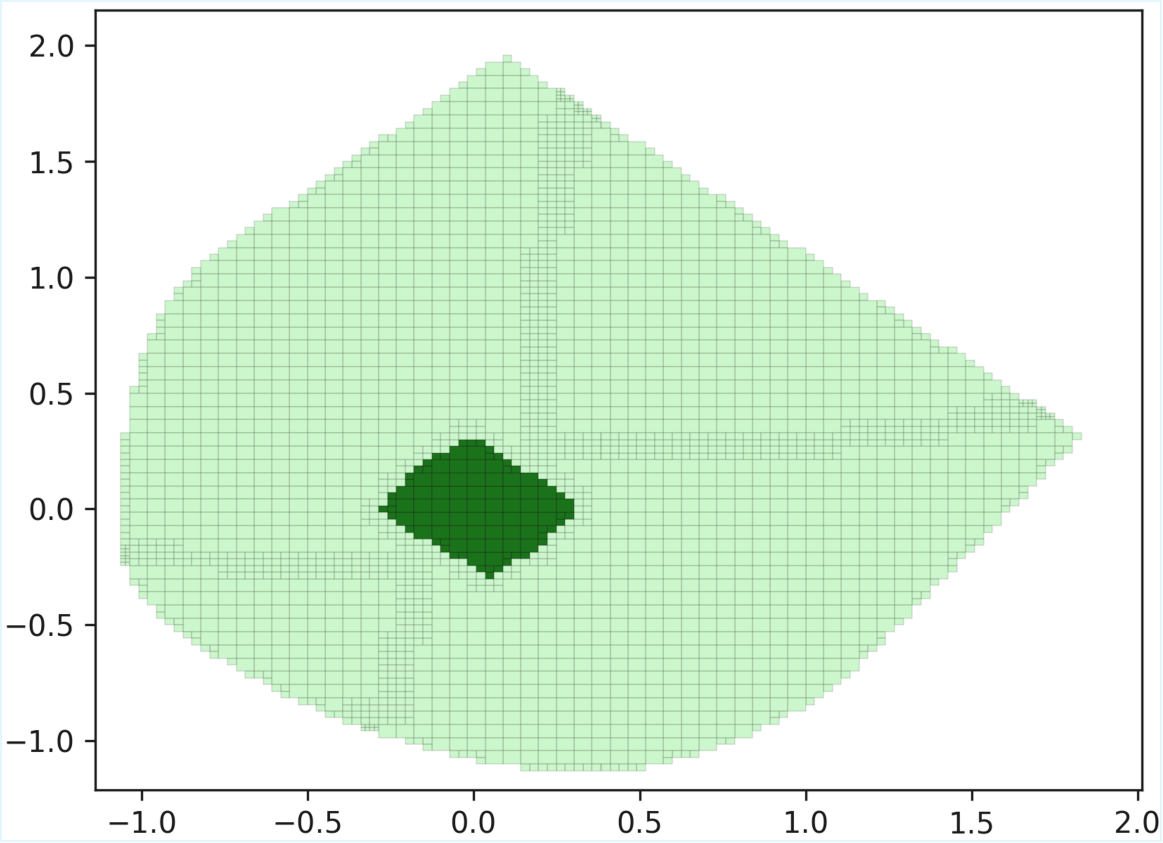}
        \caption{Verification Results}
    \end{subfigure}
    \begin{subfigure}[t]{0.495\linewidth}
        \centering
        \includegraphics[width=\linewidth]{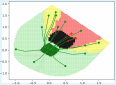}
        \caption{6D $\rightarrow$ 2D with Obstacle}
    \end{subfigure}
    \caption{Results for the ``\textit{6D-to-2D}'' Case Study. (a) Verification results without obstacles, and (b) with an obstacle. (b) We plot several trajectories (from circles to stars) to verify result, 
    green satisfy $\varphi$ and red violate in the original space.
    }
    \label{fig:6D_nonlinear}
\end{figure}

\textbf{3D Nonlinear to 2D. }
We consider a problem where all dimensions are relevant to the dynamics but we would still like to reduce the dimensionality to make abstraction more tractable. 
We also consider if we can make use of knowledge about the system to improve verification results and require that the encoder is learned such that the mapping for $x^{(1)}$ is identity. We emphasize that the 3D dynamics are the minimal realization for the system, and traditional techniques like PCA cannot identify a lower dimensional space without removing information. This system follows decaying rotational dynamics, stabilizing at the origin. In this example, 
$X = \{x \mid \|x\|_\infty \leq 1\}$,
$X_G =\{x \mid \|x\|_\infty \leq 0.2\}$, and $X_U = \emptyset$. 
The results for verification are shown in Figure~\ref{fig:3D_nonlinear}. We easily identify that the entire latent space satisfies our task, and spend 742.17 seconds for the entire framework from dataset $D$ to verification results.


To show the efficacy of our approach, we also performed verification in the original space.
When constructing a data-driven abstraction (unknown dynamics) in the original $3D$ domain, we find that we are unable to verify that the system will eventually reach the origin. While the transition system accurately identifies that there are paths to the origin, uncertainty in the data driven modeling results in the worst-case behavior being self transitions (due to slow moving dimensions), resulting in no progress to the goal. With a model-based abstraction (known dynamics), $100\%$ of the space can be verified as reaching the origin. As the latent abstraction successfully verifies the system, the mapping identified a method to entangle the slow moving state with a faster moving state that reaches to origin, avoiding self transitions.

\textbf{6D Nonlinear to 2D. }
Here, we show that our approach can reason about over-parameterized models as well. We expand the 3D dynamics of the previous example with nonlinear observations of the state to 6D. 
We use same definition of $X, X_U$, but $X_G = \{x \mid \| [x^{(1)},$ $x^{(2)}, x^{(3)}]\|_{\infty} \leq 0.2\}$ is defined over the first three states.
Verification results are shown in Figure \ref{fig:6D_nonlinear}a. We identify that the entire latent space satisfies $\phi$. 

We then consider a slightly more complex scenario, and introduce an obstacle to the environment defined as $X_U := \{x \mid x^{(1)} \in [0.3, 0.6], x^{(2)} \in [-0.7, -0.3], x^{(3)} \in [-0.25, 0.25]\}$.
Verification results are shown in Figure~\ref{fig:6D_nonlinear}b, with $59.78\%$ of space guaranteed to satisfy the specification. We note that since our method results in over-approximating unsafe regions, we are conservative in identifying $Q_{no}$, which can be seen from trajectories that pass through the obstacle in the latent space but are safe in the original space.

Abstraction in the original space begins to suffer in this example due to both state-explosion and over-parameterization. We start with a coarse discretization - which has more abstract states than the latent approach - and find that after 3063.62 seconds (more time than the entire latent space approach) the abstraction can only verify that the regions starting in the goal are guaranteed to reach the goal. However, this abstraction still correctly identifies that no state will leave the domain. While refinement could improve the results, it is clear that abstraction in the original $6D$ space is not feasible. Like the $3D$ example, the slow moving states also have a stronger impact on the verification in the original space.


\textbf{26D to 2D. }
We demonstrate that our approach can be effective for systems that are high dimensional due to the control acting on high dimensional observations like LiDAR. We consider a robot in Figure~\ref{fig: 26D safe traj} that must navigate around obstacles and reach a goal location where the controller is only given LiDAR-like observations and the direction to the goal position. For simplicity of presentation we consider 2D single integrator dynamics and 24 range readings at evenly spaced headings with a maximum range of 5 units. The controller is learned using PPO in a noisy environment. 

We theorize that, given a set of obstacles and a goal position, a 2D latent space could be learned that models this system with minimal non-determinism. 
Figure \ref{fig: 26D latent space} shows the 2D latent space. We note that this space is visually similar to a flipped and scaled version of the original domain, demonstrating interpretability, but emphasize that LiDAR observations strongly impact the point in the latent space. 
Verification results are shown in Figure~\ref{fig: 26D verification}, verifying $50.0001\%$ of the latent space satisfies the specification. Figures~\ref{fig: 26D safe traj}-d show safe and unsafe original system trajectories that start from the same latent region near the obstacle in Figure~\ref{fig: 26D verification}.


\begin{table} [t]
    \caption{Timings in seconds for the encoder training ($t_{\enc}$), Inclusion GP training ($t_\text{IGP}$), abstraction generation ($t_\text{Abs}$), and the percent volume of space in the latent domain that is in $Q_{yes}$ (\%$\textrm{Vol}(Q_{yes}$)) for each of our experiments.}
    \label{table:ver_results}
    \centering
    \scalebox{0.85}{
    \begin{tabular} {l | c c c c}
            \toprule
            System & $t_{\enc}$ (s) & $t_\text{IGP}$ (s) & $t_\text{Abs}$ (s) & \%$\textrm{Vol}(Q_{yes})$ \\
            \midrule
            $3D$ & 127.42 & 43.48 & 554.53 & 100\\
            \midrule
            $6D$ & 404.02 & 35.31 & 2111.30 & 100 \\
            $6D$ Obs & 404.02 & 35.31 & 2114.43 &  59.78\\
            \midrule
            $26D$ & 515.30 & 67.47 & 7881.23 & 50.0001\\
            \bottomrule
    \end{tabular} 
    }
\end{table}

    
\section{Conclusion}
    \label{sec:conclusion}
    In this work we bridge a gap between dimensionality reduction and formal verification, defining a method that enables scalable abstraction-based verification for high-dimensional unknown systems.
We reason about non-determinism in the latent space dynamics through a novel application of Gaussian Process Regression and 
show
that a finite state abstraction can be correctly constructed in a latent space.

While our approach is sound, there are multiple aspects that limit the flexibility, namely restricting the encoder to be a convex function. While this restriction allows us to identify regions of interest in the latent space efficiently and correctly, it also limits the expressivity of the latent domain. Similarly there remain the open questions of how to accurately incorporate noise in the latent space and how to expand the method to control synthesis.
We hope to address these limitations in future work.

\bibliographystyle{plainnat}
\bibliography{references}

\newpage

\title{Verification of Unknown Dynamical Systems via Autoencoder Latent Space\\(Supplementary Material)}
\maketitle

\appendix
\section{Extended Discussion}
    \label{sec:appendix}
    
\subsection{Propositions}
\label{app:propositions}
Here we provide propositions that are used in our approach. These propositions are based on theorem from cited work.

The following proposition is on using samples and convex hulls to over-approximate the image of a set through a function.
\begin{proposition}
    [$\epsilon$-RandUp \citep{lew2022simple}, Theorem 2] Let $D_Z = \{\enc(x) \mid x \sim r \}_{i=1}^N$ be a set of encoded points sampled from region $r$
    and $r_{Z} = \post_{\enc}(r)$ be the true image of $r$ through $\enc$. Also, let $L_h$
    be the Lipschitz constant of $\enc$. Then, there exists an $\epsilon(N,\delta) \geq 0$ such that with probability at least $1 - \delta$,
    \begin{align}
        r_{Z} \subset \text{Conv}(D_Z) \oplus \mathcal{B}_{\epsilon(N, \delta)}(0), \label{eq:eps_randup}
    \end{align}
    where $\oplus$ denotes a Minkowsky sum, $\epsilon(N,\delta)$ satisfies $\delta = D(\partial r, \frac{\epsilon}{2L})(1 - \Lambda_{L_h}^\epsilon)^N$, $D(\partial r, \frac{\epsilon}{2L_h})$ is a covering number for boundary $\partial r$, and $\Lambda_{L_h}^\epsilon$ is a lower bound on sampling points within an $\frac{\epsilon}{2L_h}$ ball of each other.\label{prop:randup}
\end{proposition}
We can formally bound $\epsilon(N, \delta)$ with 
$L_h \left(\frac{v_{n_x}}{Nv_{n_p}}\log{\frac{(4L_h\sqrt{n_x})^{n_x}}{\delta}}\right)^{1/{n_p}}$ where $v_i$ is the volume of an $i$-dimensional ball, as under the assumption that $r$ is defined as a subset of a unit ball and is uniformly sampled we can take $D(\partial r, \epsilon/2L_h) \leq (4L_h\sqrt{n_x})^{n_x}$ and $\Lambda_{L_h}^\epsilon > \frac{v_{n_p}}{v_{n_x}L_h^{n_p}}\epsilon^{n_p}$ \citep{lew2022simple}.

\subsection{Remarks} \label{app:remarks}

\begin{remark} [Strictly Monotone Activations] \label{rem:monotone}
    Note that if the activation functions of $\enc$ are not strictly monotone, then the pre-image $\pre_{enc}(Z)$ of a connected set $Z \subset \reals^{n_p}$ can easily be disconnected, i.e., the encoder induces folding. Consider the ReLU activation; since the activations act dimension wise on the outputs of the affine transformation, ReLU can easily map disjoint regions of the input to the same value after multiple layers (e.g., two disjoint sets fall into the negative range of the pre-activation after a layer and then ReLU maps them both to 0). In contrast, strictly monotonic activation functions are injective and homeomorphic on their domains, which prevents folding.

    Here, we provide an explicit example that demonstrates that an arbitrary encoder can induce "folding". Consider the triangle function $T(z) = \text{ReLU}(z+1) - 2 \text{ReLU}(z) + \text{ReLU}(z-1)$, which is non-zero only on $[-1,1]$. We can define a $2$-layer network $f(x,y) = \text{ReLU}( T(y + 2) + T(x + 2) - 2) + \text{ReLU}(T(y-2) + T(x -2) - 2)$, which maps from $2$D to $1$D. This network produces two isolated peaks centered at $(-2, -2)$ and $(2, 2)$ and is $0$ elsewhere. Although these regions are disjoint in the input space, they map to the same scalar value, demonstrating folding. This is due to a combination of non–full-rank weight structure and non-injective activations.
\end{remark}


\subsection{Example Latent Inclusion Dynamics}
\label{app:example_inclusion}
Consider the system 
$$\begin{bmatrix} x^+ \\ y^+ \end{bmatrix} = \begin{bmatrix} 1 &1 \\ 0.1 &-1 \end{bmatrix}\begin{bmatrix} x \\ y \end{bmatrix}$$ 
and an encoder that induces a non-invertible change in variable, e.g., $\enc(\begin{bmatrix} x \\ y \end{bmatrix}) = x + y$ where the latent state $z = x + y$. To compute the evolution of the latent dynamics, we can write
$$z^+ = x^+ + y^+ = (x + y) + (0.1x - y) = 1.1x.$$
Hence, the evolution of $z$ is uniquely determined by $x$, but in the latent space, we do not have access to $x$. Therefore, $z^+$ is not a single-valued function of $z$; it is set valued:
$$z^+ \in \{1.1x \> | \> x \in \reals, \exists y \;\; s.t. \;\; x + y = z\}$$
Hence, in a latent space of lower dimension than the minimal realization, the system evolves according to inclusion dynamics. In a correct minimal realization, the dynamics evolve according to a difference equation. By denoting the evolution in the latent space with the singleton $\{f(\px)\}$, the system can still be described with inclusion dynamics.

\subsection{Discussion on Kernel Choice} \label{app:kernel}

The RKHS for Sobolov spaces can be modeled exactly with the Matern kernel. The specific kernel may be difficult to determine in general, as if $f$ is continuously differentiable (or even twice differentiable) the Matern kernel may be a poor choice and have conservative predictions. We make use of the squared exponential (SE) kernel in our experiments. While the RKHS for the SE kernel is highly restrictive, the kernel is a universal kernel. This means it is capable of modeling any continuous function arbitrarily well and for any continuous target function and any tolerance $\epsilon > 0$ there exists a function in the RKHS that approximates it to within $\epsilon$ (i.e. $|f_{target} - f_{in\> SE\> RKHS}| < \epsilon$) on a compact domain \cite[Chapter 4]{steinwart2008support}. Hence, the verification results are then effectively on a function that lives in the SE RKHS which we assume to be arbitrarily close (i.e. $|f_{target} - f_{in\> SE\> RKHS}| < 10^{-10}$) to the true function $f_{target} = \enc(f(\cdot))$ without loss of generality. This is a standard assumption.

\subsection{Inclusion GP Visual} \label{app:IGPR}
Here we provide a visual of predicting a non-deterministic set evolution using the inclusion GP model. Notice how the predicted output set from a single point spans the range of observed evolutions and accurately captures non-determinism.
Figure~\ref{fig:Inclusion_GP} shows the Inclusion GP prediction from one point $z^*$ in the latent space for our 6D to 2D example, where we generate the samples $\X'_{z^*}$ by decoding $z^*$ with convex optimization.
\begin{figure}[ht]
    \centering
    \includegraphics[width=0.5\linewidth]{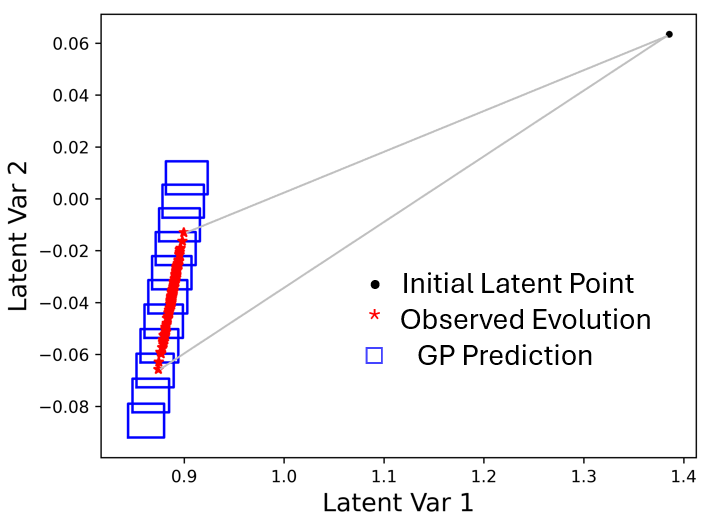}
    \caption{Visualization of Inclusion GP predictions. The initial latent variables $z^*$ is shown as a black dot (top-right), observed evolutions  $z_i^+$ as red stars (multiple samples of $\enc(\X'_{z^*})$), the predicted evolution for 10 $c^*$ values of the Inclusion GP with blue rectangles ($\mu^{(j)}_D(z^*, c^*) \pm \sigma_D^{(j)}(z^*, c^*)\sqrt{B^2 - d^*}$).}
    \label{fig:Inclusion_GP}
\end{figure}

\subsection{Decoding} 
\label{app:inverse}
Once the abstraction is verified in the latent space, it is often desirable to translate the results back to the high-dimensional system. We approach this by decoding the latent states in a correct manner.
Recall that the encoder is a convex function in our formulation. This enables decoding to be posed as a convex optimization problem, i.e., given a point $z \in Z$ in the latent space we can \emph{exactly} and efficiently decode it through the optimization of 
$$\argmin_{x \in X}\|\enc(x) - z\|.$$

While it may be difficult to find all points $x$ that map to a single $z$ we can relax the problem to instead decode a convex hyper-rectangle $q$ from our partition and identify a boundary from the original domain that encodes to this set. The search points for the optimization can be initialized with the decoder as $x_i = \dec(z_i)$ by sampling a set of points in the latent space 
$z_i \sim q$. The optimization can then be stopped when any $x_i$ is identified such that $\enc(x_i) \in q$. Let $q_c$ be the center point of $q$, $r_q$ the vector that defines the radius of the $q$ in each dimension, and denote $(\cdot)^{(j)}$ the $j$-th dimension of a vector $(\cdot)$, then this can be formulated with the constrained ($x_i \in X$) convex optimization.
\begin{align}
    \max_j \left(\frac{|\enc(x_i)^{(j)} - q_{c}^{(j)}|}{r_{q}^{(j)}} \right)< 1 \label{opt_decode}
\end{align}

Then we can use Eq. \eqref{opt_decode} to identify $\pre_{\enc}(q) \in X$.
The properties of our encoder enforce that the pre-image of a connected set is also connected, hence we can easily over-approximate the pre-image from samples using the same approach described in Section \ref{subsec:mapping}. Alternatively, we can make use of robust pre-image calculation methods for neural networks such as PREMAP \citep{zhang2025premap}, which provide a tight under-approximation of the pre-image as a union of polytopes.

\subsection{Experiment Parameters}
\label{app:experiment_params}

In each case study, we first learn $\enc$ from a dataset $D_{enc}$ (the Learning Set from Figure \ref{fig:CALM_GPR}) and then optimize the unobserved latent variable for the GP model with a separate dataset $D_{GP}$ (the Regression Set from Figure \ref{fig:CALM_GPR}). After generating the latent dataset $\{([z_i, c_i], z_i^+)\}_{i=1}^m$, we then learn a deep kernel model that predicts the evolution of the dynamics in the latent space with a network $\psi: \reals^{p+1} \rightarrow \reals^{p}$ and optimize the base kernel parameters for each dimension as $\kappa(\psi^{(j)}(z,c), \psi^{(j)}(z',c'))$, following methods outlined in \cite{reed2023promises}. Experiments were run on an Intel Core i7-12700K CPU at 3.60GHz with 32 GB of RAM.  

When training we first normalize the domain $X$ such that it is contained by the $n_x$-dimensional unit ball and use $500k$ samples to over-approximate regions with $\delta=0.0001$. When mapping regions to the latent space, we make use of kinematic relations between the states. This knowledge is often easier to identify than the dynamics and does not require us to know how the states evolve over time or which states define the minimal representation. This method helps reduce conservatism when over-approximating regions as our mapped points only contain kinematically feasible information.

Unless stated otherwise $\enc$ is a five layer network with size $\reals^{n_x} \rightarrow 1024 \rightarrow 512 \rightarrow 256 \rightarrow 128 \rightarrow \reals^{n_p}$ with a skip layer adding at the $512$ layer and the base kernel used in the deep kernel models is the squared exponential (SE) kernel with 1000 data points used for prediction. We set $\alpha_1 = 1$, $\alpha_2 = 0.7$, $\alpha_3 = \alpha_4 = 0.5$, and $\alpha_5 = 0.8$. For the 3D and 6D examples we set $A_X$ and $A_Z$ as identity weighting, and for the 26D example, we set $\alpha_5 = 5$ and remove weighting on the LiDAR inputs for the distance similarity.

As the code available for calculating the reachable set of a GP model relies on having axis-aligned hyper-cube inputs, our discretization in practice will make use of a non-uniform grid rather than a strict region-respecting discretization. We then further over-approximate the unsafe sets by defining $Q_U = \{q \in Q \mid \exists i \text{ s.t. } q \cap \hat{u}_{Z,i} \neq \emptyset \} \cup \{q_u\}$, and under-approximate the goal by $Q_G = \{q \in Q \mid q \subset \check{z}_G \}.$ 

\begin{remark} [Non-uniform Grid] \label{rem:grid}
    We start with a uniform grid that covers the latent domain $Z$ and then add finer grid cells near the boundary of the convex hulls to more closely under-approximate the non-axis aligned domain in the latent space. We then identify grid cells that overlap with mapped regions of interest and make a finer grid around their boundaries such that the sets $Q_U$ and $Q_G$, while conservative, will be less conservative than if we used a uniform grid.
\end{remark}

When refining, we choose states in $Q_?$ that have transitions to either $Q_{yes}$ or $Q_{no}$ and evenly split all dimensions of $Z$ in that region.

\subsubsection{3D example}
The dynamics for the 3D system are as follows:
\begin{subequations}
    \begin{align}
        \theta(a) &= \frac{\pi}{7}\cos{a}\\
        \begin{bmatrix} x^{(1)}_{k+1} \\
            x^{(2)}_{k+1} \\
            x^{(3)}_{k+1} \end{bmatrix} &= \begin{bmatrix}0.7\left(x^{(1)}_{k} \cos{\theta(x^{(3)}_{k})} - x^{(2)}_{k} \sin{\theta(x^{(3)}_{k})}\right) \\
            0.7\left(x^{(1)}_{k} \sin{\theta(x^{(3)}_{k})} + x^{(2)}_{k} \cos{\theta(x^{(3)}_{k})}\right)\\
            0.7 x^{(3)}_{k}       
        \end{bmatrix}
    \end{align}
\end{subequations}
We use 10000 data points to learn the encoder and the Deep Kernel network, and 1000 points for the inclusion GP predictions. The encoder is a five layer network with size $\reals^{n_x} \rightarrow 512 \rightarrow 256 \rightarrow 128 \rightarrow 64 \rightarrow \reals^{n_p}$ with a skip layer adding at the $256$ layer.
It takes 127.42 seconds to learn the encoder, 16.74 seconds for mapping regions of interest and the domain, and 43.48 seconds to learn the latent inclusion deep kernel model. We spend 96.01 seconds bounding the deep kernel neural network and then 436.41 seconds bounding the base kernel after to identify the sets $Im(q,u)$. Then 22.11 seconds are spent to identify the transition function in the latent space and then less then 1 second is spent on verification. 


\subsubsection{6D example}
The over-parameterized dynamics are as follows:
\begin{align}
    \begin{bmatrix} x^{(1)}_{k+1} \\
    x^{(2)}_{k+1} \\
    x^{(3)}_{k+1} \\
    x^{(4)}_{k+1} \\
    x^{(5)}_{k+1}\\
    x^{(6)}_{k+1}
    \end{bmatrix} = \begin{bmatrix}0.7\left(x^{(1)}_{k} x^{(5)}_k - x^{(2)}_{k} x^{(6)}_k\right) \\
    0.7\left(x^{(1)}_{k} x^{(6)}_k + x^{(2)}_{k} x^{(5)}_k\right)\\
    0.7 x^{(3)}_{k}       \\
    \cos{0.7 x^{(3)}_k} \\
    \cos (\frac{\pi}{7} \cos(0.7 \cos^{-1}(x^{(4)}_k))) \\
    \sin (\frac{\pi}{7} \cos(0.7 \cos^{-1}(x^{(4)}_k))) 
    \end{bmatrix}
\end{align}

Note that this follows the 3D dynamics with observations of sin and cos terms. 
We use 15000 data points to learn the encoder, 10000 for the Deep Kernel network, and 1000 points for the inclusion GP predictions. 

When initializing in $Q_{no}$ we find roughly $40\%$ of trajectories collide with the obstacle and the remaining trajectories are safe and reach the goal. This conservatism comes from over-approximating the unsafe set in the latent space.

\subsubsection{26D example}
For the LiDAR system, the closed loop dynamics rely on a 26D observation, without LiDAR inputs it is known that this is an unobservable system for predicting the closed loop dynamics. We discretized the single integrator with a time step of 0.4 seconds.
We use 10000 data points to learn the encoder, 30000 for the Deep Kernel network, and again use 1000 points for the IGP prediction.
It takes 515.298 seconds to learn the encoder 
and 67.47 seconds to learn the latent deep kernel. 
We spend 681.23 seconds to construct the initial latent abstraction and two hours on 10 iterative refinements, with the majority of the time being spent on bounding the outputs of the Inclusion GP model. For each refinement less then 1 second is spent on verification. 

As the original function is not smooth (discontinuities in the LiDAR observations occur in domains that are non-convex due to obstacle placement), the function $\enc(f(x))$ technically can't be modeled with the SE kernel, however $\tilde{g}(z,c)$ must be a continuous function on the input domain $Z \times C$ and hence it can be modeled with the SE kernel. Effectively, there are regions of the parameter space $c \in C$ that do not correspond with the dynamics of the original system, hence we verify a dynamics model for $g$ as a conservative connected set that contains the discontinuous evolution of $\enc(f(\cdot))$. This results in added conservatism in the abstraction, which can be seen with the unverifiable regions, the boundary of which corresponds to regions in the original space that have discontinuities in the dynamics. 
Alternate kernels that allow for discontinuities may improve the results. It is also possible that using local GP models for each region in which the dynamics are continuous would allow for a better abstraction as the system and approaches such as Work \cite{park2022jump} could be used, but local GP regression is not explored in this work. We also stress that since the original system is discontinuous, a traditional neural network architecture cannot capture the evolution of the dynamics correctly (as common architectures can only model continuous systems), which further demonstrates the strength of our approach.


\subsection{Loss Ablation study} \label{app:ablation}
Here we show results on an ablation study on the importance of each loss term when using the $6D \rightarrow 2D$ dataset. We train under two different seeds to assess consistency.
If we keep only $L_1$ and not $L_2$ we train with MSE on predictions rather than as a set valued neural network as $L_1$ can be trivially satisfied if $L_2$ isn't considered. If the Latent Domain Size is NA, that means the mapped domain collapsed to either a line or a point, in either case abstraction is impossible. We report the final loss of the network, in each case using the same hyper-parameters and network architecture, as well as an approximate range identified for the domain in the latent space in Table~\ref{table:ablation}.

Results identify that our loss terms represent competing objectives, as a smaller loss does not necessarily imply a domain that is useful for abstraction. This also results in multiple local minima, which can be seen from tests where the encoder identifies a potentially useful domain in one seed and a domain that is infeasible for abstraction under another. A detailed study on hyper-parameters for training has not been performed. 

Tests 2 and 3 show the importance of utilizing a set-valued neural network during training for dynamics (i.e. $L_1$ and $L_2$). While Test 2 uses standard MSE error on dynamics prediction and can find a latent space where discretization can be performed, Test 3 demonstrates that the dynamics likely have a very large set posterior as attempting to minimize the size of the set results in a latent space with nearly no volume. This suggests that the latent space in Test 2 would be very ill suited for accurate dynamics modeling and an abstraction would have a dense transition structure.

It also becomes clear that without both the VAE and "similar norm" ($L_4$ and $L_5$) terms, the latent space is either not suited for abstraction (either excessively large - Test 1 - or collapsed to a point or line - Tests 3, 5, 8, 11, 12) or the training is unstable and feasibility of finding an interpretable latent space becomes unclear (Tests 4, 10). Only tests $6, 7, 9$ consistently produce latent spaces where abstraction may be feasible. From test $4$ it can be seen that while $7$ (which does not include the dynamics loss term) produces a latent space that is interpretable there is no information about the dynamics and attempting to include this information immediately results in instability without the VAE term.

\begin{table} [h]
    \caption{Ablation study results when training on the 6D dataset with different loss terms weighted at 0. The first 5 columns identify the multiplier applied to $\alpha_i$ (i.e. 0 means the loss term is ignored). We run two tests for each setup under different seeds.}
    \label{table:ablation}
    \centering
    \begin{tabular} {l | c c c c c | c c}
            \toprule
            Test & $L_1$ & $L_2$ & $L_3$ & $L_4$ & $L_5$ & Loss & Latent Domain Size ($Z \subset$ ranges) \\
            \midrule
            \multirow{2}{*}{$1$} & 0 & 0 & 1 & 0 & 0 & 0.02935 & $[0, 1600] \times [0, 800]$\\
            & 0 & 0 & 1 & 0 & 0 & 0.02367 & $[0, 1000] \times [0, 1400]$\\
            \midrule
            \multirow{2}{*}{$2$} & 1 & 0 & 1 & 0 & 0 & 0.05373 & $[2, 16] \times [0, 27]$\\
            & 1 & 0 & 1 & 0 & 0 & 0.05228 & $[2, 18] \times [0, 27]$\\
            \midrule
            \multirow{2}{*}{$3$} & 1 & 1 & 1 & 0 & 0 & 0.19286 & NA \\  
            & 1 & 1 & 1 & 0 & 0 & 0.24785 & NA\\
            \midrule
            \multirow{2}{*}{$4$} & 1 & 1 & 1 & 0 & 1 & 0.26764 & $[-0.7, 1.6] \times [-0.3, 1.6]$\\
            & 1 & 1 & 1 & 0 & 1 & 0.44686 & NA\\
            \midrule
            \multirow{2}{*}{$5$} & 1 & 1 & 1 & 1 & 0 & 0.20098 & NA \\
            & 1 & 1 & 1 & 1 & 0 & 0.18932 & NA  \\  
            \midrule
            \multirow{2}{*}{$6$} & 0 & 0 & 1 & 1 & 1 & 0.36221 & $[-1, 1] \times [-1, 1]$ \\
            & 0 & 0 & 1 & 1 & 1 & 0.36680 & $[-1, 1.2] \times [-1, 1.2]$ \\
            \midrule
            \multirow{2}{*}{$7$} & 0 & 0 & 1 & 0 & 1 & 0.18156 & $[6.2, 8.5] \times [3, 5.5]$ \\
            & 0 & 0 & 1 & 0 & 1 & 0.19252 & $[7.5, 10] \times [3.3, 6]$ \\
            \midrule
            \multirow{2}{*}{$8$} & 0 & 0 & 1 & 1 & 0 & 0.19006 & NA \\
            & 0 & 0 & 1 & 1 & 0 & 0.18701 & NA \\  
            \midrule
            \multirow{2}{*}{$9$} & 1 & 1 & 0 & 1 & 1 & 0.21305 & $[-1, 1] \times [-0.7, 1]$ \\
            & 1 & 1 & 0 & 1 & 1 & 0.24344 & $[-0.7, 1.2] \times [-0.7, 1.2]$ \\
            \midrule
            \multirow{2}{*}{$10$} & 1 & 1 & 0 & 0 & 1 & 0.31367 & NA \\
            & 1 & 1 & 0 & 0 & 1 & 0.11487 & $[-0.5, 1.6] \times [-1, 1.6]$ \\
            \midrule
            \multirow{2}{*}{$11$} & 1 & 1 & 0 & 1 & 0 & 0.00703 & NA  \\ 
            & 1 & 1 & 0 & 1 & 0 & 0.04908 & NA \\
            \midrule
            \multirow{2}{*}{$12$} & 1 & 1 & 0 & 0 & 0 & 0.02646 & NA \\
            & 1 & 1 & 0 & 0 & 0 & 0.00752 & NA \\
            \bottomrule
    \end{tabular} 

\end{table}

\clearpage
\section{Proofs}
    \label{sec:app_proofs}
    
\subsection{Proof of Lemma~\ref{lemma: connected pre and post image} (connected inclusion dynamics)} \label{app:CICO}

First we prove that an ICNN with the additional constraints listed in Section \ref{sec:encoder architecture} will have connected pre-images.

The pre-image of a function $f$ from a point $y$ is the set $\{x \mid f(x) = y\} = f^{-1}(y)$. This set is referred to as a fiber. With an abuse of notation we use $()^{-1}$ to denote the pre-image of a function (i.e. a non invertible matrix still has a well defined pre-image).

Let $A_1: \reals^n \rightarrow \reals^t$ be an injective affine map (i.e. $t \geq n$), let $\phi: \reals^t \rightarrow \reals^t$ be a coordinate-wise strictly monotone, convex continuous map (hence a homeomorphism onto its image), let $A_2: \reals^t \rightarrow \reals^m$ be an affine map whose linear part has full row rank $m$ (so $A_2$ is surjective), and let $A_3: \reals^n \rightarrow \reals^m$ be an injective affine map (i.e. $m \geq n$, hence $t \geq m \geq n$).
Define
\begin{align}
    s(x) = A_2(\phi(A_1(x))) + A_3(x)
\end{align}
then, for every $y \in \reals^m$ the fiber $s^{-1}(\{y\})$ is connected. 

Let $s: \reals^{n} \rightarrow \reals^m$ be the encoder up to the first skip layer, we note that this defines a skip to layer two, but the argument holds for deeper skip layers under the same logic.
Write $f(x) = A_2(\phi(A_1(x)))$ and $g(x) = A_3(x)$, then $s = f + g$. We prove two facts and then combine them.

First, every fiber of $f$ is convex. Let $S := A_1(\reals^n) \subset \reals^t$. Since $A_1$ is injective and affine, $S$ is an $n$-dimensional affine subspace of $\reals^t$. As pre- and post- composing affine homeomorphism does not break the connectedness of fibers, we can assume without loss of generality that the affine subspace $S$ is shifted through an affine transformation $T$ to $\{(u_1, \ldots, u_n, 0, \ldots, 0)\}$ and then reason over this shifted coordinate space. Since $\phi$ is coordinate-wise strictly monotone, the mapping of the shifted subspace
\begin{align}
    \Phi := \phi|_{TS} : TS \rightarrow M := \phi(TS)
\end{align}
is also a homeomorphism from the affine subspace $S$ onto the set $M \subset \reals^t$. Thus $M$ is homeomorphic to $\reals^n$.
Fix $y\in \reals^m$, the fiber of $f$ over $y$ is
\begin{align}
    f^{-1}(\{y\}) = \{x \in \reals^n: A_2(u) = y \>, \> u=\phi(A_1(x)) \in M\}
\end{align}
Equivalently
\begin{align}
    f^{-1}(\{y\}) = \{u \in M: A_2(u) = y \} = M \cap L_y
\end{align}
where $L_y := \{u \in \reals^t: A_2(u) = y\}$ is an affine subspace of $\reals^t$. Observe that $M$ is the continuous image of the affine space $TS$ under a coordinate-wise strictly monotone map, hence $M$ is a convex set defined with intervals over the first $n$ dimensions and constants for the next $t-n$ dimensions. The intersection of convex sets is convex and hence connected. Therefore $f$ has connected pre-images.

Now we consider adding an injective affine map and show that this preserves connected fibers.
Fix $y\in \reals^m$, the fiber of $s$ over $y$ is
\begin{align}
    s^{-1}(\{y\}) = \{x: f(x) + g(x) = y\} = \{x : f(x) = y - g(x)\}
\end{align}
The injectivity of $g$ implies that $g$ is a homeomorphism onto its image ($S_3$ an $n$-dimensional affine subspace of $\reals^m$). Sine $g$ is an affine homeomorphism onto $S_3$, $x = A_3^{-1}(u)$ for $u \in S_3$. We define a new function
\begin{align}
    \tilde{f}(u) := f(A_3^{-1}(u)) \> \text{for} \> u \in S_3
\end{align}
Since $A_3^{-1}$ is a homeomorphism and every fiber of $f$ is connected, every fiber of $\tilde{f}$ is connected.
Then the fiber condition for $s$ becomes $\tilde{f}(u) + u = y$.
Hence,
\begin{align}
    s^{-1}(\{y\}) = A_3^{-1}(\{u \in S_3: \tilde{f}(u) + u = y\})
\end{align}
Again since $g$ is an affine homeomorphism, connected sets in $S_3$ correspond bijectively to connected sets in $\reals^n$. Hence, the solution set to $\tilde{f}(u) + u = y$ is the continuous pre-image of a convex intersection as before and hence connected.

We note that $s(x)$ defines an ICNN with a skip layer after the first activation. Hence, by induction, all skip layers have connected pre-images. Continuity of the system enforces that the pre-image of a connected set in the affine subspace will be defined by the union of convex sets, which may be non-convex in general but will still be connected.

We can now consider the network after skip layers. Let $T: \reals^k \rightarrow \reals^p$ be the final layers of the network where $k \geq n \geq p$ as the last skip layer must remain injective. $T$ is composed of monotone, full rank, affine transformations that reduce dimensionality and strictly monotone non-linear activations. Each affine transformation, by definition, has a connected pre-image. Strictly monotone non-linear activations are homeomorphism, and hence also have connected pre-images. Therefore $T$ has a connected pre-image for any point. Following arguments from $s$, the pre-image of $T$ under a point will be a convex set. The intersection of a convex set with an affine subspace will be a connected set in the affine subspace, hence the entire network is guaranteed to have a connected pre-image from a point, resulting in a CICO network.

Then, by Lemma~\ref{lemma: connected pre and post image}, $\X_z$ is a connected set. Further, by continuity of $f$, $\X'_z$ is also connected. The continuity of $\enc$ then ensures connectivity of $\Z'$.

A simple condition to check is if the Jacobian of the network has rank $n_p$. If this is satisfied, along with the architectural constraints, then the fibers are connected as the network is a proper submersion.

\subsection{Proof of Lemma \ref{lemma: correct state id} (Sound Reach-Avoid Abstraction States)} \label{app:correct_label}

Recall $Z$ defines an under-approximation of domain $X$ in the latent space. Also recall that $\hat{r}_{Z} := \text{Conv}(D_{Z}) \oplus \mathcal{B}_{\epsilon(N, \delta)}(0)$ where $D_{Z} = \{\enc(x_j) \mid x_j \sim r\}_{j=1}^N$
defines a high confidence over-approximation in the latent space for region $r$ by Proposition \ref{prop:randup}, that is $\hat{r_{Z}}$ defines a convex set in the latent space that will contain all points $\post_{\enc}(r)$ with high confidence. 
Hence by defining $Q_U$ according to $\hat{z}_{u,i}$, we have a high-confidence over-approximation of the set $X_U$ in the latent space abstraction.
Similarly, since $\check{z}_{G} = \text{Conv}(\{\enc(x_j) \mid x_j \sim X_G\}_{j=1}^N)$ is an under-approximation of the where the goal can map to in the latent space, and $\cup_k \hat{z}_{\lnot G, k}$ is a high-confidence over-approximation of $X \setminus X_G$, defining $Q_G$ as $\check{z}_{G} \setminus \cup_k \hat{z}_{\lnot G, k}$ we guarantee that $Q_G$ represents points that only the goal can map to with high-confidence.



\subsection{Proof of Theorem~\ref{thm: transition function construction} (Inclusion GP Error Bounds)}
For notational simplicity, we will drop $(\cdot)^{(j)}$ in the proof and work under the assumption that $g$ outputs a set in $\reals$ without loss of generality and take $C = [0, 1]$.

Let $z_i \in Z \subset \reals^{n_p}$ and there exists a function
$$\tilde{g} : \reals^{n_p} \times C \rightarrow \reals^{n_p}$$
such that
$$z_{i+1} = \tilde{g}(z_i, c_i)$$
for some unknown $c_i \in C$ and for some $\hat{x} \in \text{Pre}_{\enc}(z_i)$ then $z_{i+1} = \enc(f(\hat{x}))$. Assume $\tilde{g} \in \mathcal{H}_{\kappa}$ an RKHS with kernel over $(z,c)$, and $c$ is an accurately learned parameterization.

Then there's the true set map
$$g(z) := \{\tilde{g}(z, c) \mid c \in C\}$$
and the learned set map
$$\hat{g}(z) := \{\mu(z,c) \mid c \in C\}.$$
Then
\begin{align}
   d_H(g(z), \hat{g}(z)) \leq \sup_{c\in C}|\tilde{g}(z, c) - \mu(z,c)| \leq \overline{\epsilon}(z). 
\end{align}

First, we prove functional equivalence in the latent space under reparameterization. We assert that there exists some function $\mathcal{T}: \reals^{n_x} \rightarrow \reals^{n_p}$ that is equivalent to the mapping $\enc(f(x))$. $T$ can be defined as $\tilde{g}(\enc(x), \psi_c(x))$ where $\psi_c: \reals^{n_x} \rightarrow C$ acts as an additional unobserved coordinate in the latent space that resolves where in $g(z)$ the system evolves to.
The existence of such a $\mathcal{T}$ is trivially true, e.g. with latent dynamics model $\tilde{g}$, e.g. $\tilde{g}(z, c) = c$ and $\psi_c(x) = \enc(f(x))$. However, there are infinitely many functionally equivalent representations of $\mathcal{T}$ as $\tilde{g}$ and $\psi_c$ do not have a fixed form.

Under the assumption $\mathcal{T}$ is in the RKHS of deep kernel $\kappa(\psi_l(\enc(x), \psi_c(x)))$, then a functionally equivalent $\tilde{g}$ is in the RKHS of $\kappa(z, c)$ under the coordinates defined by $\enc$ and $\psi_c$. $\mathcal{T}$ can be shown to be arbitrarily close to a function in the RKHS of the deep kernel under the assumption that the learned networks approximately retain the level sets of $T$, which can be achieved by training the networks to predict the outputs of $\mathcal{T}$. Learning the kernel in this method to model $\mathcal{T}$ implicitly defines latent data $\{(z_i, c_i, z_i^+)\}_{i=1}^m$, which can be used as the predictive dataset in the latent space.

Then the proof that $d_H(g(z), \hat{g}(z)) \leq \overline{\epsilon}(z)$ is straight-forward.
By definition $\epsilon(z, c) \leq \overline{\epsilon}(z)$, then according to Proposition \ref{our_det_theorem} for all $c$ 
\begin{align}
    |\mu_{D}(z, c) - \tilde{g}(z, c)| \leq \overline{\epsilon}(z),
\end{align}
hence $\sup_c |\mu_{D}(z, c) - tilde{g}(z, c)| \leq \overline{\epsilon}(z)$ which satisfies the requirement that $d_H(g(z), \hat{g}(z)) \leq \overline{\epsilon}(z)$.

Let $\text{Im}(z) = \{v \mid v = \tilde{g}(z, c) \> \forall c \in C\}$. Then it follows that $|\check{\mu}_{D}(z) - \min \text{Im}(z)| \leq \overline{\epsilon}(z)$ and similarly with the maximum. Then under the assumption that $g := \{\tilde{g} \mid c \in C\}$ , $g$ is contained in the union of all error sets and the theorem is complete.

\subsection{Proof of Theorem \ref{thm: soundness} } \label{app:soundness}

Define a simulation relation as follows:
\begin{definition}[High-Confidence Simulation Relation] 
    Let $\mathcal{R} \subseteq \reals^{n_x} \times Q$ be the relation defined by $(x,q) \in \mathcal{R}$ iff $\enc(x) \in q$. 
    The NTS $\mathcal{N}$ 
    is said to simulate 
    System~\eqref{true_dynamics} on domain $X$ w.r.t. $\varphi = (X,X_G,X_U)$ with confidence $1-\delta$, if for every trajectory $\omega_{x_0}$ with $(x_0, q_0) \in \mathcal{R}$, there exists a trajectory $\omega_{q_0} \in \Omega_{q_0}$ such that, for all $k \geq 0$
    \begin{enumerate}[label=\arabic*.]
        \item $(\omega_{x_0}(k), \omega_{q_0}(k)) \in \mathcal{R}$,
        \item $\omega_{q_0}(k) \in Q_G \implies \mathbb{P}(\omega_{x_0}(k) \in X_G) \geq 1-\delta$,
        \item $\omega_{x_0}(k) \in X_U \implies \mathbb{P}(\omega_{q_0}(k) \in Q_U) \geq 1-\delta$.
    \end{enumerate}
\end{definition}

By Lemma~\ref{lemma: correct state id}, $(x, Q_G) \in \mathcal{R}$ holds only if $x \in X_G$ with probability at least $(1-\delta)$ and if $x \in X_U$ then $(x, Q_U) \in \mathcal{R}$ with probability at least $(1-\delta)$. By Proposition \ref{prop:trans}, for every transition $x' = f(x)$ if $(x,q) \in \mathcal{R}$ then $\exists q'$ such that $T(q, q') = 1$ and $(x',q') \in \mathcal{R}$. This satisfies the requirements for a simulation relation.
    
We then prove the contrapositive, 
$$\omega_{x_0} \not\models \varphi \implies \exists \omega_{q_0} \not\models \phi.$$
That is, any counter example on System \eqref{true_dynamics} must also have a counter example in the NTS. 
A sound abstraction ensures $\exists \omega_{q_0}$ that remains in relation with $\omega_{x_0}$ for all $i \geq 0$.

Assume $\omega_{x_0} \not\models \varphi$, i.e. 
(i) $\exists i \in \nats$ s.t. $\omega_{x_0}(i) \in X_U$ and $\not\exists j < i$ s.t. $\omega_{x_0}(j) \in X_G$ or (ii) $\not\exists i \in \nats$ s.t. $\omega_{x_0}(i) \in X_G$. For (i) if $\omega_{x_0}(i) \in X_U$ the probabilistic simulation relation implies that $\omega_{q_0}(i) \in Q_U$ with probability at least $(1-\delta)$, satisfying the contrapositive with probability at least $(1-\delta)$. For (ii) this implies $\omega_{q_0}(i) \notin Q_G$ for any $i$, which holds deterministically if $\omega_{x_0}(i) \notin X_G$ and again satisfies the contrapositive. 

Then $\omega_{x_0} \not\models \varphi$ implies $\exists \omega_{q_0} \not\models \phi$ with probability at least $(1-\delta)$. This establishes the contrapositive and hence $\Omega_q \models \phi \implies \omega_x \models \varphi$ with probability at least $(1-\delta)$.

\section{Extension to LTL}
    \label{sec:app_LTL}
    \subsection{Overview of LTL and NNF}

An infinite \textit{trajectory} of System \eqref{true_dynamics} is written as $\omega_\px = \px(0)  \px(1) \ldots$ and the set of all infinite trajectories is denoted as $\Omega_\px$.
We are interested in the temporal properties of System~\eqref{true_dynamics} trajectories in a subset $X \subset \reals^{n_x}$ 
w.r.t. a set of regions $R = \{r_1, \ldots, r_l\}$, where $r_i \subseteq X$.
In this work, we assume sets $r_1, \ldots, r_l$, and $X$ are convex.

To specify and reason about temporal properties, we first define a set of atomic proposition $\Pi = \{\prop_1, \ldots, \prop_l\}$, where $\prop_i$ is true iff $\px \in r_i$. Let $L: X \to 2^\Pi$ be a labeling function that assigns to each state the set of atomic propositions that are true at that state. Then, the \textit{observation trace} of trajectory $\omega_\px$ is $\rho = \rho_0 \rho_1 \ldots$ where $\rho_k = L(\omega_\px \! (k))$ for all $k \geq 0$. To express the desired temporal properties, we use Linear Temporal Logic (LTL)~\cite{baier2008principles}.

\begin{definition}[LTL Syntax]
    \label{def:LTL}
    Given a set of atomic propositions $\Pi$, an LTL formula is defined recursively as
    \begin{equation*}
        \varphi = \prop \mid \neg \varphi \mid \varphi \vee \varphi \mid \bigcirc \varphi \mid  \varphi \U \varphi 
    \end{equation*}
    where $\prop \in \Pi$, $\lnot$ ("not") and $\vee$ ("or") are Boolean operators, and $\bigcirc$ ("next") and $\U$ ("until") are temporal operators.
\end{definition}
\noindent
The operators $\wedge$ ("and"), $\square$ ("always"), and $\Diamond$ ("eventually") can be defined using $\lnot$, $\vee$, and $\U$. 

The semantics of LTL are defined over infinite traces~\cite{baier2008principles}
and provided in Appendix~\ref{app:NNF}.
We say infinite trajectory $\omega_\px$ satisfies formula $\varphi$, denoted by $\omega_\px \models \varphi$, if its observation trace satisfies $\varphi$.

\subsubsection{Semantics of LTL and NNF}
\label{app:NNF}

The semantics for an LTL specification are defined inductively as follows.
\begin{align*}
    \omega_{x0} \models p \> &\iff  p \in L(x_0) \\
    \omega_{x0} \models \neg \varphi &\iff \omega_{x0} \not\models \varphi \\
    \omega_{x0} \models \varphi \vee \psi &\iff  \omega_{x0} \models \varphi \text{ or } \omega_{x0} \models \psi \\
    \omega_{x0} \models \varphi \wedge \psi &\iff  \omega_{x0} \models \varphi \text{ and } \omega_{x0} \models \psi \\
    \omega_{x0} \models \bigcirc \varphi &\iff \omega_{x1} \models \varphi \\
    \omega_{x0} \models \varphi \U \psi &\iff \exists j \geq 0 \text{ s.t. } \omega_{xj} \models \psi \text{ and } \forall k < j \> \omega_{xk} \models \varphi \\
    \omega_{x0} \models \Diamond \varphi &\iff \exists j \geq 0 \text{ s.t. } \> \omega_{xj} \models \varphi \\
    \omega_{x0} \models \square \varphi &\iff \forall j \geq 0 \> \omega_{xj} \models \varphi 
\end{align*}
As noted, $\square$, $\Diamond$, and $\wedge$ can be defined using $\lnot$, $\U$, and $\vee$, but to define Negation Normal Form (NNF), the semantics of $\square$, $\Diamond$, and $\wedge$ must be made clear.

Any LTL formula in NNF is defined inductively as 
\begin{align*}
    \mathit{NNF}(p) &= p\\
    \mathit{NNF}(\lnot p) &= \lnot p\\
    \mathit{NNF}( \varphi \vee \psi) &= \mathit{NNF} \varphi \vee \> \mathit{NNF} \psi \\
    \mathit{NNF}( \varphi \wedge \psi) &= \mathit{NNF} \varphi \wedge \> \mathit{NNF} \psi \\
    \mathit{NNF}(\lnot ( \varphi \vee \psi)) &= \mathit{NNF}(\lnot \varphi \wedge \lnot \psi) \\
    \mathit{NNF}(\lnot ( \varphi \wedge \psi)) &= \mathit{NNF}(\lnot \varphi \vee \lnot \psi) \\
    \mathit{NNF}(\bigcirc \varphi ) &= \bigcirc \mathit{NNF}( \varphi ) \\
    \mathit{NNF}(\lnot \bigcirc \varphi ) &= \bigcirc \mathit{NNF}( \lnot \varphi ) \\
    \mathit{NNF}(\varphi \U \psi) &= \mathit{NNF}( \varphi ) \U \> \mathit{NNF}( \psi ) \\
    \mathit{NNF}(\lnot(\varphi \U \psi)) &= \mathit{NNF}( \lnot \varphi ) R \> \mathit{NNF}( \lnot \psi ) \\
    \mathit{NNF}(\Diamond \varphi ) &= \Diamond \mathit{NNF}( \varphi ) \\
    \mathit{NNF}(\lnot \Diamond \varphi ) &= \square \mathit{NNF}( \lnot \varphi ) \\
    \mathit{NNF}(\square \varphi ) &= \square \mathit{NNF}( \varphi ) \\
    \mathit{NNF}(\lnot \square \varphi ) &= \Diamond \mathit{NNF}(\lnot \varphi )
\end{align*}
where $R$ is the "release" operator defined as $\varphi R \psi \equiv \lnot( \lnot \varphi U \lnot \psi)$.

Then if $\overline{\varphi}$ has no negations on any proposition, as it is defined over $p, p_n$, $\lnot \overline{\varphi}$ will have negations on all propositions.

\subsection{NTS and Labeling for LTL Specifications} \label{app:LTL label}
For LTL specifications $\varphi$, and the associated atomic propositions, the NTS must be defined with a labeling function.
\begin{definition}
    [NTS] A labeled Transition System (TS) is a tuple $\mathcal{N} = (Q, T, L, AP)$, where 
    $Q$ is a finite set of states, 
    $T: Q \times Q \rightarrow \{0, 1\}$ is a transition function such that $T(q ,q') = 1$ if $q'$ is a successor state of state $q \in Q$, 
    $AP$ is a set of atomic propositions, and 
    $L: Q \rightarrow 2^{AP}$ is a labeling function that assigns to each $q \in Q$ a subset of $AP$.
    $\mathcal{N}$ is called \emph{Non-deterministic} (NTS) if $\sum_{q'\in Q} T(q,q') > 1$.
\end{definition}
We say $\omega_q \models \varphi$ if its observation trace $\rho = \rho_0\rho_1\ldots$ where $\rho_k = L(\omega_q \! (k))$ satisfies $\varphi$. Then $\Omega_q \models \varphi$ iff $\omega_q \models \varphi$ for all $\omega_q \in \Omega_q$.

Now labeling must be done for each region of interest in $R$ to define $L$, rather than just for $G$ and $O$. This is where the most complexity comes in, and results in an alternate lemma for the correctness of labels, while the rest of the approach remains identical.

To ensure that the labeling for each region $r_i \in R$ is done correctly in the latent space, we follow the method outlined in \citet{cauchi2019efficiency} and begin by representing $r_i$ and its complement $r_{n,i} = X \setminus r_i$ with new atomic propositions $p_i, p_{n,i}$, where $p_{n,i}$ then represents $\neg p_i$. The atomic propositions for the NTS are then defined as 
\begin{align}
    \label{eq: atomic prop AP}
    AP := \Pi \cup \{p_{n,1}, \ldots, p_{n,l}\}.
\end{align}

To enable mapping of the regions with correct under- and over-approximations for the purpose of labeling,
we first partition $r_{n,i}$ into a set of convex regions $r_{n,i}^k$ such that $r_{n,i} = \cup_k r_{n,i}^k$. Then, we construct labeling regions that correspond to $r_i$ in the latent space according to
\begin{subequations}
    \begin{align}
        \check{r}_{Z,i} = c_{Z,i} \setminus \cup_k r_{Z,n,i}^k, 
        \label{eq:under_approx} \\
        \hat{r}_{Z,i} = c_{Z,i} \oplus \mathcal{B}_{\epsilon(N, \delta)}(0), \label{eq:over_approx}
    \end{align}
    where
    \begin{align}
        &c_{Z,i} = \text{Conv}(\{\enc(x) | x \sim r_i\}_{i=1}^N), \\
        &r_{Z,n,i}^k = \text{Conv}(\{\enc(x) | x \sim r_{n,i}^k\}_{i=1}^N) \oplus \mathcal{B}_{\epsilon(N, \delta)}(0).
    \end{align}
\end{subequations}
Note that the under-approximation $\check{r}_{Z,i}$ is defined using an over-approximation of $X \setminus r_i$. We use $\check{r}_{Z,i}$ to define the set of states with label $p_i$. This is necessary to ensure that no state in $X \setminus r_i$ receives the label $p_i$ when mapped to the latent space. Intuitively, $c_{Z,i}$ is an under-approximation of where $r_i$ can map to, but not an under-approximation of where $r_i$ \emph{exclusively} maps to.

This construction guarantees we under-approximate both the region $r_i$ and its negation in the latent space, as well as the set of states in the NTS that are associated with that region.
We define the NTS labeling function $L: Q \to 2^{AP}$ 
such that $p \in L(q)$ if $q \subset \check{r}_{Z,i}$, and $p_{n,i} \in L(q)$ if $q \subset Z \setminus \hat{r}_{Z,i}$. 

\begin{lemma}
    [Conservative Labeling] 
    \label{lemma: conservative labeling}
    For a point $x \in X$, let $q_x \ni \enc(x)$ denote the latent region in the partition $Q$ that contains $\enc(x)$, and with an abuse of notation, let $L: X \to 2^{AP}$ or $L:Q \to 2^{AP}$, where 
    $AP := \Pi \cup \{p_{n,1}, \ldots, p_{n,l}\}$.
    Under the construction of the latent space and labeling procedure described above, 
    then $L(q_x)\subseteq L(x)$ holds with confidence at least $(1-\delta)^{|AP|-|L(x)|}$. 
\end{lemma}

Hence we capture (conservatively) all of the regions $R$ (and their negations) by the labels in $\mathcal{N}$.
We first translate $\varphi$ to NNF as $\mathit{NNF}(\varphi)$ (see Appendix \ref{app:NNF} for details), and then replace every $\neg p_i$ in $\mathit{NNF}(\varphi)$ with $p_{n,i}$. Then any trace that satisfies $\overline{\varphi}$ over $AP$ will also satisfy $\varphi$ over $\Pi$ \citep{cauchi2019efficiency}.

Recall $Z$ defines an under-approximation of domain $X$ in the latent space. Also recall that Eq. \eqref{eq:over_approx} defines a high confidence over-approximation in the latent space for region $r_i$ by Proposition \ref{prop:randup}, that is $\hat{r_{Z,i}}$ defines a convex set in the latent space that will contain all points $\{\enc{x} \mid x \in r_i\}$ with high confidence, therefore $Z \setminus \hat{r_{Z,i}}$ is an under-approximation of $X \setminus r_i$ in the latent domain $Z$. Also recall that in the original space $p_{n,i} \in L(x \in X \setminus r_i)$, then by labeling $Z \setminus \hat{r_{Z,i}}$ with $p_{n,i}$ we under-approximate the label in the latent space. The logic follows similarly for labels $p_i$, as we first over-approximate $X \setminus r_i$ in the latent space. This implies that if $p_i \in L(q_x)$ then $p_i \in L(x)$ for $\enc(x) \in q_x$, hence $L(q_x) \subseteq L(x)$. Since each atomic proposition is under-approximated with confidence $\geq 1-\delta$ by over-approximating its negation then, by union bound, the label $L(q_x)\subseteq L(x)$ holds with confidence $\geq (1-\delta)^{|AP|-|L(x)|}$. 

\subsection{Abstraction Soundness for LTL}

We can then re-define Sound Abstraction as follows:
\begin{definition}
    [Sound Abstraction] Let $\mathcal{R} \subseteq \reals^{n_x} \times Q$ define a relation such that $(x,q) \in \mathcal{R}$ if $\enc(x) \in q$. NTS $\mathcal{N}$ is a sound abstraction of System \eqref{true_dynamics} in domain $X$, if for any trajectory $\omega_{x_0}$, $\exists \omega_{q_0} \in \Omega_{q_0}$ s.t. $(\omega_{x_0}(k), \omega_{q_0}(k)) \in \mathcal{R}$ and $L(\omega_{q_0}(k)) \subseteq L(\omega_{x_0}(k))$ holds with confidence at least $(1-\delta)^{|AP|-|L(x)|}$ for all $k \geq 0$.
\end{definition}

\begin{theorem}
    [Sound Abstraction] 
    \label{thm: soundness LTL}
    Let NTS $\mathcal{N}$ be constructed with the states and labeling function defined according to Section~\ref{app:LTL label}, and with a transition function defined according to Proposition~\ref{prop:trans}. Then $\mathcal{N}$ is a sound abstraction for System~\eqref{true_dynamics}, and for every $(x,q) \in \mathcal{R}$, if all the paths initialized at $q$ in $\mathcal{N}$ satisfy $\overline \varphi$, then the trajectory $\omega_x$ is guaranteed to satisfy $\overline \varphi$, i.e.,
    \begin{align}
        \Omega_q \models \overline{\varphi} \; \implies \; \omega_x \models \overline{\varphi}  \qquad \forall (x,q) \in \mathcal{R}.\label{thm:sound_eq}
    \end{align}
\end{theorem}

Lemma~\ref{lemma: conservative labeling} and Proposition~\ref{prop:trans} establish $\mathcal{N}$ as a sound abstraction. Then it can be seen that if $\omega_{x_0} \not\models \overline{\varphi}$ there must exist a path $\omega_{q_0}$ where $(x_i, q_i) \in \mathcal{R}$ for all $i \geq 0$ s.t. $\omega_{q_0} \not\models \overline{\varphi}$, which implies Eq. \eqref{thm:sound_eq}. 

By Lemma~\ref{lemma: conservative labeling}, $L(q) \subseteq L(x)$ for $(x,q) \in \mathcal{R}$. By 
Proposition \ref{prop:trans}, for every transition $x' = f(x)$ if $(x,q) \in \mathcal{R}$ then $\exists q'$ such that $T(q, q') = 1$ and $(x',q') \in \mathcal{R}$. This satisfies the requirements for a sound abstraction.

We then prove the contrapositive, 
$$\omega_{x_0} \not\models \overline{\varphi} \implies \exists \omega_{q_0} \not\models \overline{\varphi}$$
That is, any counter example on System \eqref{true_dynamics} must also have a counter example in the NTS. 
A sound abstraction ensures $\exists \omega_{q_0}$ that remains in relation with $\omega_{x_0}$ for all $i \geq 0$.
Recall that $\overline{\varphi}$ has no negations at all, so $\neg \overline{\varphi}$ can be written in NNF where negations are applied to all propositions.

Assume $\omega_{x_0} \not\models \overline{\varphi}$, i.e. $\omega_{x_0} \models \neg \overline{\varphi}$. We begin with the base case $\overline{\varphi} := p$. Then $\omega_{x_0} \not\models p$ 
means 
$p \notin L(x_0)$ and since $L(q_i) \subseteq L(x_i)$ this implies $p \notin L(q_0)$ and therefore $\omega_{q_0} \models \neg p$. Hence satisfaction of negative propositions in $AP$ is preserved from System \eqref{true_dynamics} to $\mathcal{N}$. Since $\omega_{q_0}$ remains in relation with $\omega_{x_0}$, the satisfaction of negated propositions holds at subsequent states. By the semantics of LTL this satisfaction is applied inductively to Boolean and Temporal Operators.
Then $\omega_{x_0} \not\models \overline{\varphi}$ implies $\exists \omega_{q_0} \not\models \overline{\varphi}$. This establishes the contrapositive and hence $\Omega_q \models \overline{\varphi} \implies \omega_x \models \overline{\varphi}$.



\end{document}